\documentclass{article}
\usepackage{microtype}
\usepackage{graphicx}
\usepackage{booktabs}
\usepackage{amsmath,amssymb,amsthm}
\usepackage{xcolor}
\usepackage{enumitem}
\usepackage{hyperref}

\usepackage[accepted]{mlsys2025}
\makeatletter
\renewcommand{\mlsys@appearing}{Preprint.}
\makeatother
\usepackage{tikz}
\usepackage{longtable}
\usepackage[section]{placeins}
\usetikzlibrary{positioning,arrows.meta}
\usepackage[capitalise,noabbrev]{cleveref}

\newcommand{\OPT}{\mathrm{OPT}}
\newcommand{\poly}{\mathrm{poly}}
\newcommand{\RNCases}{78}
\newcommand{\RNArch}{16}

\newcommand{\RtransformerN}{15}
\newcommand{\RtransformerPos}{14}

\newcommand{\RtransformerMax}{2.75}
\newcommand{\RtransformerMedian}{1.30}
\newcommand{\RcnnN}{8}
\newcommand{\RcnnPos}{3}

\newcommand{\RcnnMax}{3.02}
\newcommand{\RcnnMedian}{0.00}
\newcommand{\RsweepN}{18}
\newcommand{\RsweepPos}{17}

\newcommand{\RsweepMax}{3.46}
\newcommand{\RsweepMedian}{1.30}
\newcommand{\RreplicationN}{9}
\newcommand{\RreplicationPos}{7}

\newcommand{\RreplicationMax}{7.60}
\newcommand{\RreplicationMedian}{2.54}
\newcommand{\RRegN}{54}
\newcommand{\RRegMispred}{22}

\newcommand{\RRegMeanRegen}{0.94}
\newcommand{\RRegMeanFixed}{1.97}
\newcommand{\RRegWorse}{9}

\newcommand{\RRegWorstAbs}{30.6}

\newcommand{\RCertGapMax}{1.50}
\newcommand{\RCertGapMin}{0.36}

\newcommand{\RCertUnitsMax}{259}
\newcommand{\RCpN}{94}
\newcommand{\RCpProved}{91}
\newcommand{\RCpUnknown}{3}
\newcommand{\RCpMedian}{0.02}
\newcommand{\RCpMax}{66}
\newcommand{\RCertMedianCp}{0.004}
\newcommand{\RCertMaxCp}{0.25}

\newcommand{\RLatMin}{0.959}
\newcommand{\RLatMax}{1.045}
\newcommand{\RLatMedian}{0.997}
\newcommand{\RTOneN}{800}
\newcommand{\RTOneFR}{773}
\newcommand{\RTOneG}{671}

\newcommand{\RTOneWorst}{1.23}
\newcommand{\RTOneRgtF}{98}
\newcommand{\RTOneFgtR}{11}
\newcommand{\RTTwoN}{375}
\newcommand{\RTTwoHelps}{117}
\newcommand{\RTTwoRatio}{0.942}
\newcommand{\RTTwoB}{290}
\newcommand{\RTTwoCert}{270}
\newcommand{\RTTwoCover}{43}
\newcommand{\RAdvWorst}{1.50}
\newcommand{\RGpuN}{50}

\newcommand{\RGpuDetBit}{27}
\newcommand{\RGpuNondet}{23}
\newcommand{\RGpuExact}{17}
\newcommand{\RGpuMod}{49}
\newcommand{\RGpuPos}{25}
\newcommand{\RGpuNeg}{8}
\newcommand{\RGpuMax}{7.4}
\newcommand{\RGpuMedPos}{1.3}
\newcommand{\RGpuWorst}{1.5}

\newcommand{\RGpuRegenWorse}{19}
\newcommand{\RGpuTimeMed}{0.998}
\newcommand{\RGpuGuardMean}{0.92}
\newcommand{\RSimN}{24}
\newcommand{\RSimDown}{6}
\newcommand{\RSimEq}{13}
\newcommand{\RSimUp}{5}
\newcommand{\RInfN}{22}

\newcommand{\RInfDeeplab}{10.0}
\newcommand{\RInfExact}{22}
\newcommand{\RInfGraphs}{22}

\newcommand{\RInfStockOpt}{20}
\newcommand{\RCertAllN}{122}
\newcommand{\RCertAll}{115}
\newcommand{\RCertAllPct}{94}
\newcommand{\RCertAllF}{99}
\newcommand{\RCertAllR}{112}
\newcommand{\RInfGap}{13}
\newcommand{\REdgeN}{16}

\newcommand{\REdgeMaxN}{134}
\newcommand{\RBfN}{6}
\newcommand{\RBfExact}{6}
\newcommand{\RPreN}{6}
\newcommand{\RPreExact}{6}
\newcommand{\RPreSame}{6}

\newcommand{\RPubMin}{5.2}
\newcommand{\RPubMax}{6.1}
\newcommand{\RPubExtraMax}{0.34}
\newcommand{\RPubExtraMin}{0.02}

\newtheorem{theorem}{Theorem}
\newtheorem{lemma}[theorem]{Lemma}
\newtheorem{proposition}[theorem]{Proposition}

\theoremstyle{definition}
\newtheorem{definition}[theorem]{Definition}
\theoremstyle{remark}

\setlist{nosep,leftmargin=*}

\mlsystitlerunning{Peak-Memory Schedules You Can Trust}

\begin{document}

\twocolumn[
\mlsystitle{Peak-Memory Schedules You Can Trust: Exact Realization, Certified Optimality, and the Limits of Recomputation for Inference}

\begin{mlsysauthorlist}
\mlsysauthor{Ivan Fomin}{yandex}
\mlsysauthor{Ivan Koryabkin}{yandex}
\mlsysauthor{Boris Sheludko}{yandex}
\end{mlsysauthorlist}
\mlsysaffiliation{yandex}{Yandex, Moscow, Russia}
\mlsyscorrespondingauthor{Ivan Fomin}{ash-ivanfomin@yandex-team.ru}
\mlsyscorrespondingauthor{Ivan Koryabkin}{koryabkin@yandex-team.ru}
\mlsyscorrespondingauthor{Boris Sheludko}{bsheludko@yandex-team.ru}
\mlsyskeywords{Memory scheduling, Compilers, Rematerialization}

\vskip 0.3in
\begin{abstract}
Real-time inference on small devices needs low activation memory without extra latency, which makes operator order the most valuable memory lever: it is free in time.
We show that the usual way of deploying a better order, handing it back to the compiler and regenerating code, is unreliable.
In PyTorch Inductor the regenerated program misses its predicted peak in \RRegMispred{} of \RRegN{} CPU configurations.
It is worse than the stock program in \RRegWorse{} of them and in \RGpuRegenWorse{} of \RGpuN{} on a GPU, and on the GPU it also changes numerical results.
We instead profile the generated program, build an exact model of its memory behaviour from its own statements and their read/write effects, and permute those statements.
We prove that this realizes the modeled peak whenever allocation does not depend on history.
It does so to the byte, with bitwise-identical results, in every one of our CPU runs.
On the GPU we characterize why the caching allocator breaks the condition.
To know when to stop, we give forward and reverse ``trap'' lower bounds, each computable with $O(n^2)$ minimum cuts.
They certify the found order optimal for \RCertAllPct\% of held-out wrapper graphs and for all \REdgeN{} microcontroller-class models we study, in under a second.
The certificates show that most inference graphs are already scheduled optimally.
Branching networks are the exception, where reordering is free and pays, e.g.\ \RInfDeeplab\% on DeepLabV3.
In compiled training we save up to 7.6\%.
Beyond the certified optimum we study selective recomputation with per-operator caps.
It is strongly NP-hard for every fixed cap even at treewidth two, it encodes knapsack exactly on series--parallel graphs, and it is solvable exactly on bounded treewidth.
It is useless on public tiny models whose peak is a single layer, but on architectures that hold a cheap value for a late reuse it pays: on the public ReDimNet speaker networks, whose order is already optimal, it lowers the peak by \RPubMin--\RPubMax\% at under 0.4\% extra MACs, with bitwise-identical outputs.
\end{abstract}
]
\printAffiliationsAndNotice{}


\section{Introduction}
\label{sec:intro}

A neural network that runs in real time on a phone, a hearing aid or a microcontroller has to fit its activations into a fixed, small memory budget, and it has to do so without slowing down.
Most of the familiar levers for reducing activation memory trade it for time.
Rematerialization recomputes values instead of storing them~\cite{chen2016training,jain2020checkmate}.
Offloading moves them to slower memory.
Patch-based execution recomputes overlapping receptive fields~\cite{lin2021mcunetv2}.
For training these trades are usually acceptable, because a few percent of throughput buys a larger batch.
For real-time inference they often are not: a keyword spotter or a streaming speech model has a hard deadline per frame.
The interesting levers there are the ones that are free in time, such as the order in which operators run, and the recomputations that cost very little time but remove a large buffer.

Operator order is already exploited.
PyTorch Inductor ships a peak-memory reordering pass that is enabled by default~\cite{ansel2024pytorch2}.
TensorFlow Lite Micro plans a static arena for the order it is given~\cite{david2021tflm}.
A line of work schedules irregular graphs exactly or hierarchically~\cite{ahn2020serenity,wang2022hmcos,steiner2023model,jin2023newtools}.
These systems share an implicit assumption: if a better order is found on the compiler's graph, the deployed program will be better by the predicted amount.
They also leave a practical question unanswered, namely when to stop.
A deployment engineer who has run a scheduler cannot tell whether the result is optimal or whether a better order exists that the heuristic missed.

This paper makes peak-memory scheduling trustworthy in both senses.
We show that the assumption above is false in a mainstream compiler, give a realization method for which it is true by construction, and give certificates that settle the question of when to stop.
We then ask what selective recomputation can add once the order is exhausted.
The answer is a mix of good and bad news that we think is useful to state precisely.

\paragraph{Regeneration is unreliable.}
The usual way to deploy a better order is to hand it back to the compiler and regenerate code.
Code generation, however, runs \emph{after} scheduling.
It decides how kernels are grouped into calls, which dead buffers are reused in place, and when references are dropped, and a new order changes those decisions.
Across \RRegN{} compiled CPU configurations of \RNArch{} architectures, the regenerated program misses its own predicted peak in \RRegMispred{} cases and is worse than the stock program in \RRegWorse.
On a GPU it is worse than stock in \RGpuRegenWorse{} of \RGpuN{} configurations and, in addition, changes numerical results (\cref{sec:regen}).

\paragraph{Fixed-wrapper realization is exact.}
We instead take the generated program as it is, profile one execution, and build an exact model of its memory behaviour at the granularity of its statements.
We then permute the program's own statements.
Kernels, arguments and buffer reuse stay untouched; only the order and the placement of allocations and releases change.
Under an explicit contract the realized peak equals the modeled peak (\cref{thm:lift}).
On CPU this holds to the byte in all \RNCases{} training runs and all \RInfN{} inference runs, and outputs and gradients are bitwise identical to the stock program.
On a GPU the CUDA caching allocator violates one condition of the contract, and we characterize exactly how (\cref{sec:gpu}).

\paragraph{Certificates tell us when to stop.}
We give two lower bounds on the optimal peak, a forward and a reverse ``trap'' bound.
Each is computable with $O(n^2)$ minimum cuts, and each dominates the classical event-cut bound.
Together they prove the found order optimal for \RCertAll{} of \RCertAllN{} held-out wrapper graphs, in well under a second, and for all \REdgeN{} microcontroller-class models we study.
This changes the practical conversation.
For most inference graphs the certificate shows that the stock order is \emph{already optimal}, so no scheduler can help.
The exception is branching networks, where reordering is free and the saving can be large: \RInfDeeplab\% on DeepLabV3 inference.

\paragraph{The limits of recomputation.}
When the certified optimum is still above the budget, recomputation is the next lever, and for inference it is attractive only if it is selective, recomputing a few cheap values rather than whole subgraphs.
We model this with per-operator recomputation caps $k_v$, as in Moccasin~\cite{bartan2023moccasin}, and prove three things.
\begin{itemize}
\item Peak feasibility is strongly NP-hard for every fixed uniform cap, already at treewidth two.
\item Minimizing extra work under cap two is exactly a knapsack problem on series--parallel graphs.
\item An exact algorithm exists on bounded treewidth, but with a necessary dependence on the numeric budget.
\end{itemize}
Empirically, recomputation at tensor granularity cannot help the public tiny models we study.
In every one of them the peak is set by a single operator's inputs and output, a bound no schedule of whole tensors can beat (\cref{sec:tiny}).
Recomputation pays off where a large value is held across the network only for a late reuse and can be recreated cheaply from a small ancestor.
The public ReDimNet speaker-embedding networks have exactly this structure.
In all seven sizes their order is already optimal, yet recomputing the stem once lowers the peak by \RPubMin--\RPubMax\% at no more than \RPubExtraMax\% extra MACs (\cref{sec:rdn}).
For such values our model of caps and our certificates apply directly; for tiny models whose peak is a single layer, savings require splitting tensors, as patch-based inference does~\cite{lin2021mcunetv2}.

We do not propose a new search heuristic; the search in our pipeline is an existing exact capacity search.
Our contributions are the realization method, the certificates, the measurements that show where scheduling can and cannot help, and the complexity results that delimit selective recomputation.

\section{Background and Model}
\label{sec:model}

\paragraph{The one-shot root model.}
A schedule executes each operation of a directed acyclic graph once.
Operation $v$ allocates new buffers (``roots'') of total size $a(v)$ and may use transient memory, and both are allocated before any of its inputs can be released.
A root is live after a set $S$ of executed operations if it has been produced and is either pinned (it outlives the program, like a returned output) or still has an unexecuted consumer.
Writing $L(S)$ for the live bytes and $s(v)\ge0$ for the transient envelope of $v$, executing $v$ after $S$ costs $L(S)+a(v)+s(v)$.
The peak of an order is its largest event cost, and $\OPT$ is the smallest peak over topological orders.
Multi-output calls, views and external inputs are expressed by roots with several consumers.
$L$ is submodular on the lattice of ideals, which is what makes minimum-cut arguments work below.

\paragraph{Compiled programs.}
PyTorch Inductor lowers a model to a flat Python \emph{wrapper} that allocates buffers, calls generated C++ or Triton kernels and library kernels (matrix multiplication, convolution, attention), aliases dead buffers for reuse, and drops references with \texttt{del}.
For training, AOTAutograd produces one wrapper for the forward and one for the backward pass.
The point that matters for this paper is that storage is born and freed by the wrapper, not by the graph the scheduler sees.
Deployment runtimes for microcontrollers, such as TFLite Micro, instead execute operators in model order and place every tensor in a statically planned arena~\cite{david2021tflm}.
On GPUs, PyTorch uses a caching allocator that rounds requests and reuses cached blocks.
\Cref{sec:gpu} shows that this difference between static, exact allocation and history-dependent caching is decisive for whether a modeled peak is realized.

\section{Realizing a Schedule Exactly}
\label{sec:realize}

\begin{figure*}[t]
\centering
\begin{tikzpicture}[node distance=4mm and 7mm, font=\footnotesize,
  box/.style={draw=black!55, rounded corners=2pt, align=center, minimum height=9mm, text width=24mm, fill=white},
  acc/.style={box, draw=blue!60!black, fill=blue!6},
  bad/.style={box, draw=orange!70!black, fill=orange!7},
  arr/.style={-{Stealth[length=2mm]}, draw=black!60}]
\node[box] (m) {model + inputs};
\node[box, right=of m] (c) {stock compile\\(Inductor wrapper)};
\node[box, right=of c] (p) {profile one step\\per statement};
\node[box, right=of p] (r) {root model\\units, R/W effects};
\node[box, right=of r] (s) {exact search\\+ certificates};
\node[acc, below=7mm of s] (f) {\textbf{fixed wrapper}\\permute statements};
\node[bad, below=7mm of c] (g) {regenerate code\\with forced order};
\node[box, below=7mm of f, xshift=-17mm, text width=40mm] (v) {re-profile: measured peak $=$ model?\\(guard: keep stock if worse)};
\draw[arr] (m) -- (c); \draw[arr] (c) -- (p); \draw[arr] (p) -- (r); \draw[arr] (r) -- (s);
\draw[arr] (s) -- (f); \draw[arr] (s.south west) to[bend left=12] node[above, font=\scriptsize, sloped]{usual pipeline} (g.north east);
\draw[arr] (f) -- (v); \draw[arr] (g.south) |- node[pos=0.75, below, font=\scriptsize]{changes kernels, reuse, numerics} (v.west);
\end{tikzpicture}
\caption{Fixed-wrapper realization (blue) versus the usual regeneration pipeline (orange). The order is optimized on a model built from the generated program itself, and then applied to that program by permuting its statements, so the kernels, their arguments and buffer reuse are those that were profiled.}
\label{fig:pipeline}
\end{figure*}

\Cref{fig:pipeline} summarizes the method.
We compile the model once with the stock compiler and capture every generated wrapper.
We then run one warm step in which each wrapper statement executes inside a profiler scope.
After each statement we record the storage address and size bound to every name it assigns.
Every allocation and free event is thereby attributed to a statement.
Storages are identified by allocation epoch rather than by address, because the allocator reuses addresses.

\paragraph{From statements to a root model.}
Each statement that calls a kernel or a library function becomes a \emph{unit}.
A statement that only allocates a buffer joins the first unit that touches that buffer, so allocation happens as late as possible.
Aliases (views, tuple projections and the reuse assignments Inductor emits) and \texttt{del} statements are not units; they are placed during emission.
A storage that is allocated and freed inside a single statement is a transient and contributes to that unit's envelope $s(v)$.
Every other storage born in the program becomes a root produced by its unit.
Its consumers are the units that touch any name bound to it, and it is pinned if it outlives the call.
Because a profiled storage is identified with all the names that ever referred to it, ownership through views and tuples is accounted for exactly.

\paragraph{Dependencies from read/write effects.}
Ordering constraints come from data flow at storage level.
For generated C++ kernels we read the const-ness of every pointer parameter from the emitted signature.
For Triton kernels we use the role encoded in the parameter name (\texttt{in\_ptr}, \texttt{out\_ptr}, \texttt{in\_out\_ptr}).
For library calls, the \texttt{out=} argument and the first argument of a mutating operator are writes.
Calls we do not recognize are serialized with every other unit.
Per storage we then add read-after-write, write-after-read and write-after-write edges, which leaves readers of the same buffer generation free to reorder.
This distinction turns out to matter more than the search itself (\cref{sec:train}).

\paragraph{Emission.}
Given a unit order, we emit the original statements, each exactly once.
Allocations go immediately before their unit, and aliases, assertions and \texttt{del}s go as early as their dependencies allow.
No statement is synthesized and no release is invented.
The one exception is renaming: a few generated wrappers rebind an input name, and we rename it so that every name is assigned once, deleting the old binding where the original rebinding would have dropped it.
The reordered function is compiled into the same module, so it calls the very kernel objects that were profiled.

\begin{theorem}[Exact realization]\label{thm:lift}
Assume the generated program satisfies a contract with six conditions:
(C1)~region-level effects are complete;
(C2)~ownership and completion are as recorded;
(C3)~each unit computes a deterministic function of its inputs;
(C4)~each unit's sequence of allocation events is as recorded;
(C5)~that sequence is independent of the surrounding live memory (the frame condition);
(C6)~emission introduces no other events.
Then every unit order that respects the dependencies preserves the program's results, bitwise if the kernels are deterministic, and realizes exactly the peak predicted by the root model.
\end{theorem}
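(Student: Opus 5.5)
The proof splits into two independent claims, and I would treat them separately: semantic preservation and exact peak. Both go by induction over the emitted statement sequence. At each prefix of a dependency-respecting unit order $\pi$, I compare the machine state with an abstract state, namely the set $S$ of executed units together with the contents of every storage generation.

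For results, the invariant I would maintain is this: after executing the prefix $S$ of $\pi$, every storage generation that some unexecuted unit may read holds exactly the bytes it held when that generation was read in the stock execution. By (C1) every read and write of a unit is visible as a region effect. The read-after-write, write-after-read and write-after-write edges then force three things: each reader sees the last writer it saw in the stock order, no later write overwrites a value before its readers run, and writes to the same storage occur in stock order. Only readers of one generation may commute, and reads do not change state. (C2) ensures that a name bound to a storage denotes the same storage in the permuted program, including through views, tuples and reuse aliases. It also ensures that a kernel has completed before its output is read or freed. By (C3) each unit then produces the same outputs from the same inputs, bitwise when the kernels are deterministic. Pinned roots are outputs, so the returned values coincide. (C6) guarantees that the emitted aliases, \texttt{del}s and allocation statements add no computation. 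The single-assignment renaming preserves bindings, so it changes nothing observable.

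For the peak, I would show that the live bytes after the prefix $S$ equal $L(S)$. By construction allocations are emitted immediately before their unit, and a root is released exactly when its last reference is dropped. Under (C2) that happens at the \texttt{del} or rebinding following its last consumer, which emission places as early as dependencies allow. It is never done for pinned roots. Hence after $S$ the resident storages are exactly the produced roots that are pinned or still have an unexecuted consumer, which is the definition of $L(S)$. During unit $v$ itself, (C4) says its allocation events are the recorded sequence: its roots $a(v)$, plus transients whose running maximum was profiled as $s(v)$. (C5) says this sequence, and therefore its byte footprint, does not depend on the fact that the surrounding live set is now $L(S)$ rather than the stock one. So the in-unit peak is $L(S)+a(v)+s(v)$, and roots are released only after the inputs' last use, consistent with the model's convention. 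Taking the maximum over events gives the modeled peak. (C6) excludes any stray allocation between units.

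I expect the main obstacle to be the bookkeeping identity between ``last reference dropped'' and ``last consumer executed''. Making it precise requires the alias and \texttt{del} placement to preserve reference counts. I would try an explicit lemma: for each storage, the set of names referring to it at any point of the emitted program equals the stock set restricted to names whose defining and consuming units are consistent with $S$. I would prove it by induction using the ``as early as dependencies allow'' rule and the renaming step.
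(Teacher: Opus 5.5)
Your proposal is correct and follows essentially the paper's proof: value preservation from the dependency edges together with (C1) and (C3), and exact peak from an event-by-event correspondence between modeled and concrete live storage using (C2), (C4), (C5) and (C6). The differences are minor. The paper obtains value preservation by reaching the new order through adjacent swaps of non-conflicting units rather than your forward invariant on generations, and it avoids your separate reference-count lemma by tracking full owner sets (names, tuple slots, saved values, native references) on both sides, so that (C2) makes each storage free at the same event in both.
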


The proof (\cref{app:lift}) is an event-by-event simulation that maintains a correspondence between modeled and concrete live storage, owners and buffer generations.
Its ingredients are standard.
Staged effects exist in MLIR~\cite{mlir}, and heap simulation over an instruction sequence exists in XLA~\cite{xla-heap}.
What is specific is that the composition survives \emph{direct} permutation of a generated program that calls opaque library kernels.
The contract is not proven for library kernels, so our harness never trusts it.
Every emitted program is re-profiled, and the measured peak of every wrapper is compared with the prediction.
During development this check caught two bugs of our own.
An alias used only by the final return was emitted too late, and it kept a tuple, and with it a 16\,KB gradient, alive.
Stochastic-depth layers in some vision models made even the stock program non-reproducible.
Both were fixed before the runs reported here.

\paragraph{Search.}
The root models have up to \RCertUnitsMax{} units.
We search with an exact capacity search: depth-first search over ideals with forced non-increasing steps and minimum-cut packet closure.
It bisects the capacity between the lower bound of \cref{sec:cert} and the peak of the stock order.
In almost all graphs it reaches the certified bound within milliseconds, so the search is not where the difficulty lies.

\section{Certifying That No Better Order Exists}
\label{sec:cert}

A scheduler that returns a peak $U$ is only half an answer; the other half is a lower bound $\mathrm{LB}$ with $\mathrm{LB}=U$, which proves that no order is better.
For an operation $v$ let $A(v)$ be its strict ancestors and $D(v)$ its descendants including $v$.
The classical \textbf{event-cut bound} asks, for every $v$, how small the live memory can possibly be just before $v$ runs:
\[
G=\max_{v}\Big[a(v)+s(v)+\min_{X\text{ ideal}:\,A(v)\subseteq X,\ v\notin X} L(X)\Big].
\]
Each inner minimum is a minimum-weight closure, i.e.\ one minimum cut.
The bound can be loose because the cheapest prefix for one operation may be incompatible with the cheapest prefix for another.

\begin{definition}[Trap bounds]
Let $c(v)=a(v)+s(v)$.
For a proper ideal $R$ with ready frontier $F(R)$, let $f_v(R)=c(v)+\min\{L(X): A(v)\subseteq X\subseteq R\}$ and $\mathcal F=\max_R \min_{v\in F(R)} f_v(R)$.
For a nonempty ideal $T$ and maximal $v\in T$, let $q_v(T)=c(v)+\min\{L(X): T\setminus\{v\}\subseteq X\subseteq V\setminus D(v)\}$ and $\mathcal R=\max_T\min_{v} q_v(T)$.
\end{definition}

The forward bound $\mathcal F$ rests on a simple observation.
Every order leaves any proper ideal $R$ through some first operation, and at that moment its actual prefix lies between that operation's ancestors and $R$.
The reverse bound $\mathcal R$ uses the last operation of any nonempty ideal $T$ instead.
Neither argument assumes that the minimizing prefix is reachable, which is why both are sound but can be loose.

\begin{theorem}\label{thm:traps}
(a)~$G\le\mathcal F\le\OPT$ and $G\le\mathcal R\le\OPT$.
(b)~Each of $\mathcal F$ and $\mathcal R$ can be computed with $O(n^2)$ minimum-cut calls, independently of the magnitude of the weights.
(c)~The two bounds are incomparable.
(d)~$\OPT/\mathcal F$ is unbounded even on binary in-trees and on a family of pairwise merges, while $\mathcal R=\OPT$ on both families.
\end{theorem}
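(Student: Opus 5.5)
For (a) I would argue soundness and domination separately. \emph{Soundness of $\mathcal F$}: fix an optimal order $\sigma$ and a proper ideal $R$. Let $v$ be the first operation of $\sigma$ outside $R$, and let $X$ be the set executed before $v$. Then $X\subseteq R$, and $X$ is an ideal. It contains $A(v)$ because $\sigma$ is topological, and $v\in F(R)$ because all its ancestors lie in $X\subseteq R$. So the event cost of $v$ is $c(v)+L(X)\ge f_v(R)\ge\min_{u\in F(R)}f_u(R)$. Maximizing over $R$ gives $\mathcal F\le\OPT$.

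\emph{Soundness of $\mathcal R$}: for a nonempty ideal $T$, let $v$ be the last operation of $T$ in $\sigma$. Then $v$ is maximal in $T$. The prefix $X$ before $v$ contains $T\setminus\{v\}$ and avoids $D(v)$, so its event cost is at least $q_v(T)$.

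\emph{Domination of $G$}: given the maximizing $v$ of $G$, I would take $R=V\setminus D(v)$ for $\mathcal F$. Every frontier element $u$ of $R$ then has $R\subseteq V\setminus\{v\}$, and I need $f_u(R)\ge$ the $G$-term of $v$. That fails as stated, so instead I would choose $R$ so that $v$ is the \emph{only} frontier element. The natural candidate is $R=A(v)$ closed upward as much as possible while excluding $v$, i.e.\ $R=V\setminus D(v)$ restricted so that $F(R)=\{v\}$. If that is not possible, I would use the simpler comparison that every $G$-term is attained by some $R$ for which the minimizing frontier op coincides. For $\mathcal R$, take $T=A(v)\cup\{v\}$. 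Its only maximal element is $v$, and $q_v(T)$ is exactly the $G$-term. I would mirror this for $\mathcal F$ via $R=$ the largest ideal avoiding $v$ together with a tie-breaking argument. This step is where I expect subtlety.

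For (b), the outer maximization over exponentially many ideals is the obstacle. I would use the submodularity of $L$ on ideals, stated in \cref{sec:model}, to reduce it. The plan is to compute $\mathcal F$ by a monotone ``trap-growing'' iteration. Start with threshold candidates and maintain the ideal $R^\ast$ of operations that are forced, meaning $f_v(R)<\lambda$ for frontier $v$. Repeatedly add the ready op $v$ whose $f_v$ is below the current threshold, which can only decrease $f$ values since $f_v(R)$ is antitone in $R$. Each addition costs one min-cut per frontier candidate, giving $n$ additions times $n$ candidates, i.e.\ $O(n^2)$ cuts. The answer is read off as the largest $\min_v f_v$ encountered. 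Independence from the weights holds because the iteration is combinatorial and uses no bisection over weights. $\mathcal R$ is symmetric, working on the reversed lattice. The correctness of this greedy, namely that a maximizing $R$ is reached, is the main obstacle. I would prove it via an exchange argument using antitonicity of $f_v$ in $R$.

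For (c) and (d) I would give explicit small DAGs. For (c), one graph has a large output that is cheap only if it is produced late, which $\mathcal F$ sees and $\mathcal R$ does not, and the reversed graph gives the other direction. For (d), I would take binary in-trees with weights such that forward traps always find a frontier leaf with a tiny prefix, so $\mathcal F$ stays $O(1)$ while $\OPT$ grows with the depth (Sethi–Ullman style). Taking $T$ to be the subtree of the costliest merge then yields $\mathcal R=\OPT$ by induction on the tree. The pairwise-merge family is handled the same way.
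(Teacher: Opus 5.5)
The real gap is your choice of trap in (d). If $T$ is the subtree of a vertex $u$ (with the source), then $u$ is its only maximal element and $T\setminus\{u\}=A(u)$. So $q_u(T)$ is exactly the event-cut term of $u$. This is the identity you yourself use to show $\mathcal R\ge G$.

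Such traps therefore never certify more than $G$. On both families $G=\mathcal F$ ($3$ on the in-trees, $2$ on the merges), so no induction over subtrees can reach $\OPT=h+2$ or $k$. The missing idea is that $\mathcal R$ gains over $G$ only through ideals with \emph{many} maximal elements, so that whichever runs last finds the others already executed.
\begin{itemize}
\item For $\mathcal T_h$, take $T$ to be the source with all vertices of height at most one, and let $v$ be the last height-one vertex. Every ideal $X$ with $T\setminus\{v\}\subseteq X\subseteq V\setminus D(v)$ contains both children of $v$, which are live. For each of the $h-1$ strict ancestors $a$ of $v$, the subtree of the child of $a$ off the path meets $X$ while $a\notin X$, so it holds a live value. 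Hence $q_v(T)\ge h+2$.
\item For the merges, take $T=\{s,a_1,\dots,a_k\}$. When $a_i$ is last, each $a_l$ with $l\ne i$ lies in $X$ while its consumer $m_{il}\in D(a_i)$ does not, so $q_{a_i}(T)\ge k$.
\end{itemize}
Explicit schedules give $\OPT\le h+2$ and $\OPT\le k$, and soundness closes both equalities, so no separate Sethi--Ullman lower bound is needed. Also, $\mathcal F$ is small not because some frontier leaf is cheap (there may be none). It is small because every ready $v$ can use $A(v)\subseteq R$ itself as its prefix, which gives $f_v(R)\le 3$, respectively $\le 2$.

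Your soundness proofs and the domination $\mathcal R\ge G$ are the paper's. The rest is right in outline but incomplete.

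In (a) your hesitation about $\mathcal F$ is unfounded. $R_v=V\setminus D(v)$ is an ideal whose ready frontier is exactly $\{v\}$, since every other member of $D(v)$ has a predecessor inside $D(v)$. Moreover, the ideals between $A(v)$ and $R_v$ are exactly the ideals containing $A(v)$ and excluding $v$. So $f_v(R_v)=g_v$, and no tie-breaking is needed.

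In (b) your greedy saturation is the right algorithm, but its correctness, which you leave open, is the substance. It is a first-exit argument that uses only antitonicity of $f_v$ in $R$, not submodularity. Every visited ideal is proper, so the recorded maximum $\mu$ is at most $\mathcal F$. Conversely, let $T$ attain $\mathcal F$ and let $v$ be the first operation admitted outside $T$. It is admitted from some $R_i\subseteq T$, hence lies in $F(T)$, and $\mathcal F\le f_v(T)\le f_v(R_i)\le\mu$.

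In (c) you give no example, and the appeal to the reversed graph does not work. Allocation before release, scratch, multi-consumer roots and pinning are not symmetric under time reversal. The in-tree family already gives $\mathcal R>\mathcal F$. For $\mathcal F>\mathcal R$ you need an obstacle at the \emph{first} exit. For instance, take two independent operations with scratch $1$, each freeing its own external input of size $2$, from an initial memory of $4$. Whichever runs first costs $5=\mathcal F=\OPT$, while in $\mathcal R$ each may assume the other has already run, which gives only $3$.
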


Domination holds because the ideal $V\setminus D(v)$ has $v$ as its only ready operation, which reproduces the event-cut term for $v$.
Computation uses monotone optimistic saturation.
For $\mathcal F$, we repeatedly admit a frontier operation of smallest $f_v(R)$ and record the running maximum; the reverse bound removes maximal operations symmetrically.
Proofs, both strict separations and the two families appear in \cref{app:traps}.
A certificate consists of the stuck ideal and one flow/cut pair per frontier operation.
Our verifier recomputes liveness from the graph, checks every cut value against its flow and replays the upper order, without importing the solver or running max-flow.

Whether $\OPT/\max(\mathcal F,\mathcal R)$ is bounded is open.
\Cref{fig:certificates}a shows how tight the bounds are in practice.
On \RTOneN{} random DAGs of up to 14 vertices (series--parallel, pumpkin, tree, random and training-shaped), compared against the exact optimum, $\max(\mathcal F,\mathcal R)$ is exact in \RTOneFR{} cases.
The event-cut bound $G$ is exact in only \RTOneG.
The reverse bound is the stronger of the two: it beats $\mathcal F$ in \RTOneRgtF{} instances and loses in \RTOneFgtR.
The worst ratio we found is \RTOneWorst{} on random graphs and \RAdvWorst{} with an adversarial hill-climb over 13-vertex graphs.

\begin{figure*}[t]
\centering
\includegraphics[width=\textwidth]{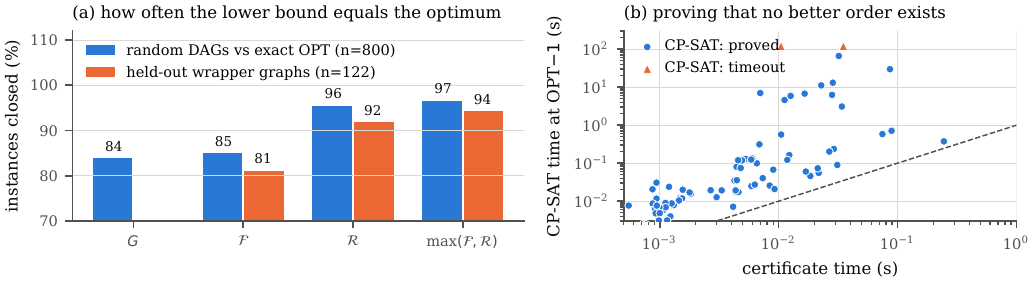}
\caption{(a) Fraction of instances on which a lower bound equals the optimum. Blue: random DAGs with exactly computed optima. Orange: forward and backward wrapper graphs of the held-out training and inference models, where the reference is the peak of the found order. (b) Time to certify that $\OPT-1$ is infeasible, trap certificates versus an exact CP-SAT cumulative model with one worker and a 120\,s limit, on the certified graphs.}
\label{fig:certificates}
\end{figure*}

\section{Evaluation}
\label{sec:eval}

\paragraph{Setup.}
All runs use PyTorch 2.14.1 with Inductor.
On CPU, every run uses FP32 and one thread; we also repeat the inference grid in bfloat16.
The GPU is an NVIDIA RTX 2080 Ti with a CUDA~12.6 build of the same release.
The stock baseline is Inductor with \texttt{reorder\_for\_peak\_memory} and in-place buffer reuse enabled; only the compile cache is disabled.
Dropout and stochastic depth are disabled.
Weights are random unless stated otherwise; \cref{sec:precision} shows that pretrained weights give identical schedules and peaks.
Inference runs evaluate the model in evaluation mode without gradients.
Training runs measure one compiled forward pass, an MSE loss against a fixed random target and the backward pass, without an optimizer step.
Peak is the maximum of tracked tensor allocations inside the measured step; it excludes the resident weights and inputs, which we add back where we report savings ``including the frame''.
Every run measures the stock, the normalized (our emission of the stock unit order) and the reordered program twice, and compares outputs and all gradients bitwise across them.

The study has four parts:
\begin{itemize}
\item server-scale inference: \RInfN{} configurations, including segmentation networks, CNNs, and Transformer prefill with long sequences;
\item microcontroller-class inference: \REdgeN{} models, among them the four MLPerf Tiny reference networks~\cite{banbury2021mlperftiny,zhang2017helloedge}, TC-ResNet keyword spotters~\cite{choi2019tcresnet}, small MobileNets, MNASNet, ShuffleNet, SqueezeNet, EfficientNet-B0, GoogLeNet and a tiny U-Net;
\item compiled training: 50 configurations of 16 architectures, plus shape sweeps, ablations and seed repeats;
\item the GPU replication.
\end{itemize}

\subsection{Regeneration is unreliable}
\label{sec:regen}

For every configuration we also map the optimized unit order back to Inductor's scheduler nodes, force it in the reordering pass, recompile, and measure.
\Cref{fig:regen} plots the outcome against the saving that fixed-wrapper realization achieves with the same order.
If regeneration were faithful, all points would lie on the diagonal.
On CPU the regenerated program misses its predicted peak in \RRegMispred{} of \RRegN{} training configurations.
Where a saving exists it realizes \RRegMeanRegen\% on average instead of \RRegMeanFixed\%.
In \RRegWorse{} configurations it is worse than stock, once by \RRegWorstAbs\% (T5 encoder at sequence length 512).
On the GPU the picture is worse: regeneration is worse than stock in \RGpuRegenWorse{} of \RGpuN{} configurations.
The post-fusion graph is unchanged in all of these cases.
What changes is how kernels are grouped into calls and which dead buffers are reused; for the encoder--decoder, 559 backward statements become 554.
Since recompiled kernels also round differently, regeneration on GPU additionally changes gradients, while fixed-wrapper realization does not.
A memory scheduler that may silently make a program worse, and change its numerics, is hard to deploy; this is the main reason we realize schedules on the generated program.

\begin{figure}[t]
\centering
\includegraphics[width=\columnwidth]{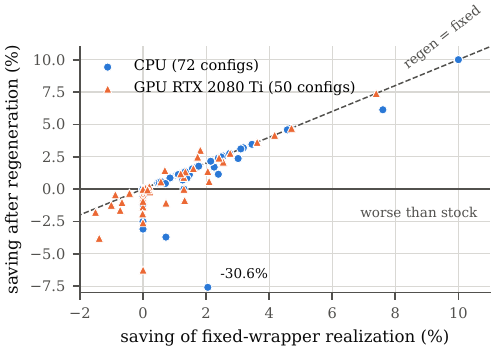}
\caption{The same optimized order, realized by permuting the generated program (x) versus regenerating code for it (y). Points below the diagonal lose part of the saving and points below zero are worse than the stock program; one CPU point at $-30.6\%$ is clipped.}
\label{fig:regen}
\end{figure}

\subsection{Inference: where reordering helps and where it cannot}
\label{sec:inference}

For server-scale inference, fixed-wrapper realization is exact in all \RInfExact{} configurations, with zero prediction error and bitwise-identical outputs.
The saving, however, is concentrated.
Only one configuration improves: DeepLabV3-ResNet50 at $512\times512$, whose peak drops by \RInfDeeplab\%.
DeepLabV3's atrous spatial pyramid has five parallel branches over a large feature map, and the stock order keeps several of their outputs alive at once.
In the other configurations the certificates prove the stock order optimal for \RInfStockOpt{} of the \RInfGraphs{} inference graphs; the one open case is discussed in \cref{sec:certeval}.
This includes ResNet, MobileNetV2, EfficientNet, DenseNet, Inception, Swin, ViT, BERT, GPT-2 and Llama prefill at up to 2048 tokens.
Inference graphs are mostly chains of blocks whose activations die immediately, so a single sensible order is all there is.
The practical value of the certificate is precisely to say so, and to direct effort elsewhere.

\label{sec:tiny}
\paragraph{Public microcontroller-class models.}
For the tiny models we build the operator graph the way an embedded runtime sees it.
Batch normalization is folded into convolutions, activations are fused into their producer, views are aliases, and activations are int8.
We compare the order in which TFLite Micro would execute the model with the certified optimum, and evaluate the arena with a TFLite-Micro-style greedy planner.
The certificates prove the model order already optimal for all \REdgeN{} models (up to \REdgeMaxN{} operators), and executing every schedule reproduces the modeled peak exactly.
More importantly, in every one of them the peak equals a trivial lower bound: the inputs plus the output of a single operator.
\Cref{fig:tiny} shows why.
The memory profile of a tiny CNN has a sharp early spike at a high-resolution layer (MobileNets), or one at a skip-concatenation (U-Net), and the rest of the network is far below it.
No schedule of whole tensors, with or without recomputation, can go below that single-operator bound.
The capped-recomputation solver, given a cap of two on every operator, accordingly never finds a feasible reduction.
These models are deliberately designed so that the early high-resolution layer dominates, and further savings for them require splitting that layer's tensors into tiles, as patch-based inference does~\cite{lin2021mcunetv2,alwani2016fused}.
Tiling recomputes the halo regions between tiles, which is exactly the selective recomputation with per-vertex caps studied in \cref{sec:remat}, now on a tile-level graph.

\begin{figure}[t]
\centering
\includegraphics[width=\columnwidth]{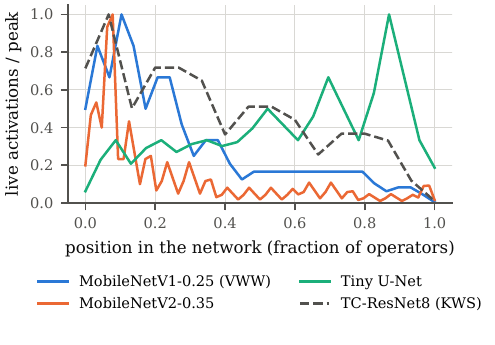}
\caption{Live activation memory along four microcontroller-class models, executed in model order, normalized by the peak. Each peak is set by a single operator (an early high-resolution layer, or a U-Net skip concatenation), which bounds what any schedule of whole tensors can achieve.}
\label{fig:tiny}
\end{figure}

\subsection{Selective recomputation on a speaker-embedding network}
\label{sec:rdn}

ReDimNet~\cite{yakovlev2024redimnet} is a family of speaker-embedding networks designed for on-device speaker recognition, in seven sizes (B0--B6).
We take the public VoxCeleb2 checkpoints and trace each backbone at the operator level, fusing normalization and activations into their producers, with fp32 activations.
Inputs are 2\,s and 0.8\,s of audio (201 and 81 feature frames); the latter is the typical window of a streaming engine.
At every stage ReDimNet aggregates the outputs of all previous stages, and every stage also reads the stem output again.
The stem output is therefore held across the whole network for these late reuses, although it is a short and cheap function of the input features, which are small.

For every model the certificates prove the model order already optimal, so reordering alone gains nothing.
We then keep the small input features alive and recompute the stem chain once from them for its late uses.
The transformed graph is rescheduled exactly, and the certificates prove the new order optimal for it as well.
Executing the resulting schedule reproduces the modeled peak, and the output is bitwise identical to the original model.
\Cref{tab:rdnpub} shows the 2\,s results; the 0.8\,s results have the same relative savings.
The peak falls by \RPubMin--\RPubMax\% at \RPubExtraMin--\RPubExtraMax\% extra MACs, the smaller models paying more because their stem is a larger share of the work.
For B3 and B6 a greedy arena planner loses the saving to fragmentation; a joint order-and-placement step would be needed there.
Recomputing the stage outputs themselves would not help, because each of them is a direct input of every later aggregation, including the one at the peak.
The pattern that rewards selective recomputation is a value held only for a \emph{late} reuse that does not coincide with the peak event.

On internal production on-device models, which we cannot release, the same selective recomputation reduced the deployed TFLite activation arena by up to 15\% at about 1\% extra MACs, with bitwise-identical outputs.
These numbers are reported for context only; all claims in this paper rest on the public models above.

\begin{table}[t]
\centering\scriptsize
\caption{Selective recomputation on the public ReDimNet backbones (2\,s input, fp32 activations). Peak live activation memory (MB) of the certified optimal order, before and after recomputing the stem for its late uses, and the extra MACs. All outputs are bitwise identical; both orders are certified optimal for their graphs.}
\label{tab:rdnpub}
\setlength{\tabcolsep}{2.5pt}
\begin{tabular}{@{}lrrrrr@{}}
\toprule
Size & Ops & GMACs & Opt.\ order & Recompute & MACs \\
\midrule
B0 & 309 & 0.59 & 7.28 & 6.85 ($-$6.0\%) & $+$0.33\% \\
B1 & 223 & 0.85 & 10.49 & 9.85 ($-$6.1\%) & $+$0.33\% \\
B2 & 364 & 1.51 & 16.78 & 15.91 ($-$5.2\%) & $+$0.25\% \\
B3 & 651 & 4.40 & 16.78 & 15.91 ($-$5.2\%) & $+$0.09\% \\
B4 & 537 & 7.05 & 33.56 & 31.76 ($-$5.4\%) & $+$0.11\% \\
B5 & 537 & 13.21 & 33.56 & 31.76 ($-$5.4\%) & $+$0.06\% \\
B6 & 433 & 30.26 & 33.56 & 31.76 ($-$5.4\%) & $+$0.02\% \\
\bottomrule
\end{tabular}

\end{table}

\subsection{Compiled training}
\label{sec:train}

Training graphs have more freedom, because the backward pass interleaves gradient computation with the release of saved activations.
\Cref{tab:main} summarizes the savings of fixed-wrapper realization on CPU, and \cref{tab:full} in \cref{app:full} lists every configuration.

\begin{table}[t]
\centering\small
\caption{Peak saving of fixed-wrapper realization over stock Inductor (memory pass on), compiled training on CPU. The prediction error is 0\,B and gradients are bitwise identical in every run.}
\label{tab:main}
\setlength{\tabcolsep}{4pt}
\begin{tabular}{@{}lrrrr@{}}
\toprule
Group & $n$ & $>0$ & Median\,\% & Max\,\%\\
\midrule
Replication of earlier rows & \RreplicationN & \RreplicationPos & \RreplicationMedian & \RreplicationMax\\
Transformers (held-out) & \RtransformerN & \RtransformerPos & \RtransformerMedian & \RtransformerMax\\
CNNs (held-out) & \RcnnN & \RcnnPos & \RcnnMedian & \RcnnMax\\
BERT/GPT-2 batch$\times$sequence & \RsweepN & \RsweepPos & \RsweepMedian & \RsweepMax\\
\bottomrule
\end{tabular}
\end{table}

The saving comes from lifetimes at the critical backward call, not from allocation volume, which is identical in both programs.
\Cref{fig:timeline} shows the backward wrapper of a two-layer Transformer encoder.
The stock order materializes two parameter gradients, of 64\,KiB and 16\,KiB, before the flash-attention backward call.
The reordered program defers them past that call, holding one 32\,KiB activation a little longer in exchange.
The net saving is 48\,KiB, and the new peak equals the certified lower bound.
CNNs gain only where the backward pass has parallel branches (ResNet-50, ConvNeXt, Swin); chain-like ones are already optimal, and their certificates prove it.
The savings do not follow a simple rule.
Over the batch and sequence sweeps they are non-monotone, because the critical call changes with the shape.
\Cref{app:negative} gives two families in which either of the obvious rules, ``finish the gradient branch first'' or ``always defer gradients'', nearly doubles the peak.

\begin{figure}[t]
\centering
\includegraphics[width=\columnwidth]{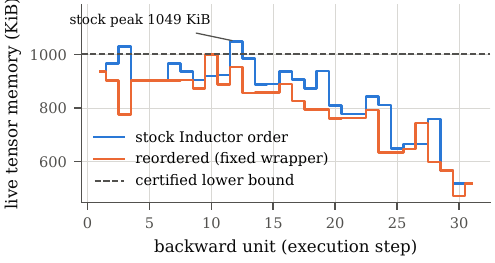}
\caption{Live memory at each unit of the backward wrapper of a two-layer Transformer encoder. Deferring two parameter-gradient materializations past the flash-attention backward call lowers the peak to the certified lower bound.}
\label{fig:timeline}
\end{figure}

\paragraph{Precise effects matter more than search.}
If every storage access is treated as a write, as a conservative implementation might do, most of the saving disappears.
The encoder loses all of its 4.6\%, the large-sequence encoder all of its 4.7\%, and DistilBERT all of its 2.3\%; the encoder--decoder drops from 7.6\% to 6.2\%.
The freedom lies in reordering readers of the same buffer generation, for example the two matrix multiplications of a linear layer's backward pass.
Disabling Inductor's own memory pass leaves the stock peak unchanged in six of eight configurations, and raises it by 1.3\% and 1.9\% in the other two.
From either starting point, fixed-wrapper realization ends below the stock program with the pass enabled.

\subsection{Certificates on held-out models}
\label{sec:certeval}

None of the models in this section were used to design the bounds.
On the \RCertAllN{} forward and backward wrapper graphs of the training grid and the inference grid, the trap bounds certify the found order optimal in \RCertAll{} cases (\RCertAllPct\%).
The forward bound alone certifies \RCertAllF{} and the reverse bound alone \RCertAllR, as \cref{fig:certificates}a shows.
In training the remaining gaps, between \RCertGapMin\% and \RCertGapMax\%, occur in the backward graphs of DistilBERT, BERT and ViT-B.
In inference there is a single open case, RegNetY-400MF, with a gap of \RInfGap\%, the only graph where the bounds are clearly loose.
These gaps may be due either to the bound or to the search.
As a baseline we posed the same infeasibility question ($\OPT-1$) to CP-SAT with an exact cumulative model (\cref{fig:certificates}b).
CP-SAT answers \RCpProved{} of \RCpN{} queries, with a median time of \RCpMedian\,s and a maximum of \RCpMax\,s, and times out on \RCpUnknown.
The certificates need a median of \RCertMedianCp\,s and at most \RCertMaxCp\,s, and unlike CP-SAT they can be checked independently.

\subsection{GPU replication and the caching allocator}
\label{sec:gpu}

We reran the training grid and the sweeps on the GPU (\RGpuN{} configurations).
The wrappers there call Triton kernels.
Correctness carries over.
In all \RGpuDetBit{} configurations where the stock program is itself deterministic, the reordered program is bitwise identical.
In the other \RGpuNondet{}, where cuDNN or the attention backward use atomics, the reordered gradients differ from stock by no more than stock differs from itself.
Exactness does not carry over (\cref{fig:gpu}).
Fixed-wrapper realization lowers the peak reported by the allocator in \RGpuPos{} configurations, by a median of \RGpuMedPos\% and up to \RGpuMax\%, but it raises it in \RGpuNeg, by up to \RGpuWorst\%.
The block-level prediction is exact in only \RGpuExact{} of \RGpuMod.

The cause is the frame condition (C5) of \cref{thm:lift}.
The CUDA caching allocator rounds requests, and in its large-block pool it does not split off remainders below 1\,MiB.
Which block a tensor receives, and therefore how many bytes it occupies, depends on the allocation history.
The logical peak, measured by the allocator's \texttt{requested\_bytes} counter, moves exactly as on CPU; the 8-layer Llama, for example, drops by 0.86\% in both settings.
Expandable segments do not remove the effect.
Replaying the CPU traces through a model of the caching allocator tells the same story.
Of the \RSimN{} configurations with a logical saving, reserved memory decreases in \RSimDown{}, stays the same in \RSimEq{} because 20\,MiB segments absorb the change, and increases in \RSimUp.
Because a single profiled step measures the true peak, we deploy reorderings on GPU behind a guard that keeps the stock program unless the measured peak improves.
With the guard no configuration regresses, and the mean saving is \RGpuGuardMean\%.
Static arenas, which embedded runtimes and ahead-of-time compilers use, do not have this problem.
There the allocation is a function of the schedule alone, so the condition holds by construction.

\begin{figure}[t]
\centering
\includegraphics[width=\columnwidth]{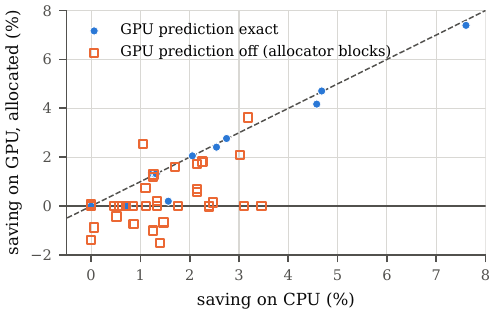}
\caption{Saving of the same reordering on CPU and on GPU (peak allocated by the CUDA caching allocator). Filled markers: the GPU peak equals the model prediction. Hollow squares: the allocator's history-dependent block sizes break the prediction.}
\label{fig:gpu}
\end{figure}

\subsection{Precision, pretrained weights and cost}
\label{sec:precision}

In bfloat16 inference (\RBfN{} configurations) fixed-wrapper realization is again exact and bitwise identical to stock in \RBfExact{} cases, and the certificates again prove the stock order optimal.
The schedule depends only on shapes and on the generated program, not on weight values.
With pretrained torchvision and Hugging Face checkpoints (\RPreN{} configurations), the stock and optimized peaks are identical to those with random weights in \RPreSame{} cases, and realization is exact in \RPreExact.
In particular, pretrained DeepLabV3 saves the same \RInfDeeplab\%.
One caveat concerns DeepLabV3's default torchvision checkpoint, which adds an auxiliary FCN head.
That head changes the graph, because an additional intermediate feature map must stay alive for it. This raises the peak by 10\% and makes the stock order optimal.
Saving memory therefore starts with deploying exactly the graph that is needed.
Reordering does not change the step time.
In an alternating protocol over 10 CPU configurations, the median reordered-to-stock time ratio is \RLatMedian, within [\RLatMin, \RLatMax].
On GPU the median ratio is \RGpuTimeMed.
Profiling, building the model and certification take one extra step plus under a second for almost every graph.

\section{Selective Recomputation and Its Limits}
\label{sec:remat}

When the certified optimum is still above the memory budget, the remaining lever is recomputation.
For real-time inference it is acceptable only in a selective form, where a few cheap values are recomputed so that a large buffer need not be kept.
We model this as in Moccasin~\cite{bartan2023moccasin}.
Vertex $v$ produces an immutable value of $w_v$ bytes at price $c_v$ (for example its MACs), and it may be computed at most $k_v$ times.
Its inputs must be resident when it runs, and its output is allocated before inputs are released.
The question is how little extra work buys a given memory budget $B$.
Uniform caps $k_v=k$ and individual caps express different policies.
An individual cap of two on cheap elementwise operators and one everywhere else is the kind of policy a real-time system can afford.

\paragraph{Certificates of necessary recomputation.}
The one-shot bounds of \cref{sec:cert} carry over.
If a capped schedule computed every vertex of a set $P$ exactly once with peak at most $B$, then the order of first computations would be a one-shot schedule of the graph with all weights outside $P$ set to zero, with peak at most $B$ (\cref{lem:proj}).
Hence any one-shot certificate above $B$ for that projected graph proves that some vertex of $P$ must be recomputed.
Constraints on individual events strengthen this into a lower bound on the extra work.
Every once-computed value that must survive an event contributes to that event's memory, which yields a knapsack-cover constraint on the recomputed bytes (\cref{app:remat}).
On \RTTwoN{} small DAGs with uniform cap two, where we compute everything exactly, three findings stand out.
\begin{itemize}
\item Recomputation lowers the optimal peak in \RTTwoHelps{} of them, by a mean factor of \RTTwoRatio.
\item Below the one-shot optimum, the projection certificate proves that recomputation is necessary in \RTTwoCert{} of \RTTwoB{} budgets.
\item The cover bound explains on average \RTTwoCover\% of the exact extra work.
\end{itemize}

\begin{table}[t]
\centering\small
\caption{Complexity of peak feasibility with recomputation caps (minimum work where noted). TTSP: two-terminal series--parallel graphs.}
\label{tab:complexity}
\begin{tabular}{@{}p{0.50\columnwidth}p{0.40\columnwidth}@{}}
\toprule
Restriction & Result \\
\midrule
fixed uniform cap $k\ge1$, treewidth $\le2$ & strongly NP-complete\\
caps in binary & PSPACE-complete (classical)\\
min.\ work, uniform $k{=}2$, TTSP, out-degree $\le2$ & equals 0--1 knapsack\\
caps $\le3$, TTSP, out-degree $\le 3$ & NP-complete\\
(peak, work) frontier, $k{=}2$, TTSP & $2^m$ points on $O(m)$ vertices\\
min.\ work, tree decomposition of width $t$ & $f(t,K,B)\,\poly(n,L)$\\
\bottomrule
\end{tabular}
\end{table}

\paragraph{Complexity.}
\Cref{tab:complexity} summarizes what can and cannot be computed efficiently.

\begin{theorem}\label{thm:hard}
For every fixed $k\ge1$, deciding whether a DAG with uniform recomputation cap $k$ can compute its sink with peak at most $B$ is strongly NP-complete, even on DAGs of treewidth at most two.
\end{theorem}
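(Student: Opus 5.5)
Membership in NP comes first. With a fixed uniform cap $k$, every schedule has at most $kn$ compute events. An interleaving of computations and releases of length $O(kn)$ is therefore a polynomial certificate. We can verify it by simulating live memory and checking every event against $B$. Releases can be normalized to happen as early as possible without raising any event's cost, so the certificate is just the sequence of computations.

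For hardness I would reduce from a strongly NP-hard number problem, 3-PARTITION. The instance is $3m$ integers $x_1,\dots,x_{3m}$ with target $T$ and $T/4<x_i<T/2$, all polynomially bounded, which is what makes the result strong. I first treat $k=1$, where recomputation is forbidden and the question is one-shot peak feasibility. The plan is a ``spine plus gadgets'' DAG of treewidth at most two.

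The spine is a chain of $m$ bin vertices $b_1\to\dots\to b_m$ ending at the sink. Each bin event has a large fixed cost, arranged so that just before $b_j$ exactly $T$ units of headroom remain. Each item $i$ is a two-vertex gadget: a source $u_i$ produces a value of weight $x_i$, consumed by a vertex $r_i$ that feeds the sink. The gadget is arranged so that item $i$ is ``open'' during exactly one bin event. To force an item to be open somewhere, I would attach the gadgets in parallel to spine segments. Each gadget hangs between the spine's start and the sink, so it adds only series--parallel pieces and keeps the treewidth at two.

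What makes a gadget open is that $r_i$ also needs a large companion value, and that value is only available during a window at some bin. The budget then forces the open items at each bin to sum to at most $T$. Since the total is $mT$, they sum to exactly $T$, which gives the partition. Conversely, a partition yields a schedule by processing each bin's three items at its event.

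For $k>1$ I would amplify so that recomputation cannot help. Each item's value $x_i$ is produced from a heavy ancestor of weight $W\gg mT$. That ancestor is itself produced only by a vertex whose inputs are dead after the first window, so recomputing it would exceed $B$ or be structurally impossible. One way to do this is to make each gadget source depend on a pinned chain whose values are freed once consumed. This forces each item to be computed once, so the cap $k$ is irrelevant and the $k=1$ argument transfers for every fixed $k$.

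The main obstacle is the gadget design. It has to make items stay live across exactly one bin, rather than being computed and consumed within slack, while keeping treewidth two and defeating recomputation for arbitrary fixed $k$. Because capped schedules may recompute any vertex $k$ times, I expect to check carefully that a schedule cannot split an item's weight across bins by recomputing it. If the simple gadget fails, my first fallback is to replace each item by a chain of $k+1$ copies with uniform weights, so that any recomputation costs a full extra $x_i$ at a spine event already at budget.
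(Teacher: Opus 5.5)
Your membership argument is fine, but the hardness part has two gaps. For $k=1$, the mechanism that keeps item $i$ ``open'' across exactly one bin event is never built. With $u_i\to r_i\to$ sink hanging off the start of the spine, a schedule can run each item pair entirely between two bin events, or after $b_m$, so the headroom of $T$ at the bins constrains nothing. The ``companion value available during a window at some bin'' is exactly the gadget that would have to let the schedule choose the bin, force it to choose one, and keep treewidth two, and you leave it unspecified. You do not need to build it. The $k=1$ case is precisely the strong NP-hardness of weighted one-shot scheduling on pumpkin graphs, which have treewidth two~\cite{jin2023newtools}, and the paper simply reduces from that.

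The more serious gap is $k>1$, where your plan to make recomputation useless fails. In this model every vertex, including sources and your heavy ancestor, may be recomputed up to $k$ times, and a dead input can itself be recomputed. So recomputation is never ``structurally impossible''. Nothing makes a second computation of the heavy ancestor costlier than the first, since the schedule can place it in a low window between bins, and a pinned value is never freed. Recomputation really does break a one-shot gadget: compute $u_i$, drop it, recompute it just before $r_i$, and $x_i$ is never live at a bin. You would also have to control recomputation of spine and companion vertices. The ``$k+1$ copies'' fallback does not help either, because a recomputation charges memory only when it runs, not at a spine event.

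The missing idea is to \emph{exhaust} the cap rather than forbid recomputation.
First rescale so all weights are positive: $w_v\mapsto(n+1)w_v+1$ and $B\mapsto(n+1)B+n$. This preserves the answer, since at most $n$ values coexist, and the numbers stay polynomial in unary.
Then append at the old sink $t$ zero-weight vertices $r_1,\dots,r_k$ and $z_1,\dots,z_{k-1}$, and barriers $h_i$ of weight $B$. The edges are $t\to r_i$ and $r_i\to h_i\to z_i\to r_{i+1}$, and $r_k$ is the new sink.
At the first computation of each $h_i$ no old value fits beside it. Yet the first $r_{i+1}$ needs a resident $t$, and hence a fresh computation of every old vertex after that barrier (all are ancestors of $t$ in a pumpkin).
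The schedule therefore splits into $k$ disjoint phases, each computing every old vertex. Under cap $k$, each old vertex is computed exactly once per phase, so any phase restricted to old events is a one-shot schedule with peak at most $B$.
Conversely, repeating a one-shot schedule $k$ times with resets in between is feasible.
The gadget is a fan at $t$ covered by bags $\{t,r_i,h_i\}$, $\{t,h_i,z_i\}$, $\{t,z_i,r_{i+1}\}$, so treewidth stays two. Because the reduction is generic, no detail of the $k=1$ instance has to survive recomputation.
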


For $k=1$ this is the strong NP-hardness of weighted one-shot scheduling on pumpkin graphs, which have treewidth two~\cite{jin2023newtools}.
For larger $k$ we append a reset gadget: $k$ phases separated by barriers of weight $B$, which force every old value to be evicted.
With a uniform cap of $k$ each old vertex then runs exactly once per phase, so each phase is a one-shot schedule.
The gadget is a fan at the sink and keeps the treewidth at two (\cref{app:hard}).
Small interfaces do not help either.
On series--parallel graphs with out-degree two and a uniform cap of two, the minimum extra work is \emph{exactly} the optimum of a knapsack instance,
\[
\OPT(B)-\sum_v c_v=\sum_i c_i-\max\Big\{\sum_{i\in S}c_i:\sum_{i\in S}w_i\le C\Big\},
\]
and the trade-off between peak and work can have exponentially many non-dominated points.

\paragraph{An exact algorithm on bounded treewidth.}
\begin{theorem}\label{thm:fpt}
Given a tree decomposition of width $t$ and an integer budget $B$, the minimum total work subject to peak at most $B$ and all caps can be computed in time $f(t,K,B)\poly(n,L)$, where $K$ bounds the caps and $L$ is the input length.
\end{theorem}

The difficulty is that the obvious interface state, namely which boundary values are resident and how often each was computed, is not sufficient.
Two histories can agree on it and still differ in which \emph{internal} vertices have exhausted their caps (\cref{app:fpt}).
The algorithm therefore records, at each node of the decomposition, the complete word of compute and delete events of the bag vertices, of length at most $2K(t+1)$.
Between consecutive boundary events it keeps the typical-sequence reduction~\cite{bodlaender2020typical} of the private memory profile, of length at most $2B+1$, together with a literal witness schedule.
\Cref{thm:hard} shows that the dependence on $B$ cannot be reduced to $\poly(B)$ unless P$=$NP.
A rounding step yields a schedule with peak at most $(1+\varepsilon)B$ and work at most that of any schedule with at most $p$ live values (\cref{app:fpt}).
The algorithm is a statement about tractability, not a tool.
Our prototype exceeds $3\cdot10^6$ states on width-two graphs with six vertices at $K=2$ and $B=8$ (\cref{app:t3}).

\paragraph{What this means for tiny models.}
On the public microcontroller-class models of \cref{sec:tiny}, recomputation of whole tensors is useless, because the peak is a single operator's footprint.
Tiling that operator, and possibly its neighbours, turns one large vertex into many small ones that share halo regions.
Recomputing a halo is then a cap $k_v>1$ on a small vertex, and the extra work is the halo's MACs.
Tile-level graphs of a few early layers are narrow, so the exact machinery of this section, together with the certificates of \cref{sec:cert}, applies to them directly.
Choosing tile shapes jointly with the schedule, under an explicit latency budget and with a certificate of optimality, is the natural continuation of this work.
Patch-based inference today chooses them heuristically~\cite{lin2021mcunetv2}.

\section{Related Work}
\label{sec:related}

\paragraph{Peak-memory scheduling.}
Register sufficiency, the unit-weight version of the problem, is NP-complete~\cite{sethi1975complete}.
Jin et al.\ prove strong NP-hardness of the weighted one-shot problem on pumpkins and out-trees, and give exact algorithms for series--parallel graphs based on dominant schedules~\cite{jin2023newtools}.
SERENITY schedules irregularly wired networks for edge devices by dynamic programming and graph rewriting~\cite{ahn2020serenity}.
HMCOS expands hierarchical groups around the peak~\cite{wang2022hmcos}.
MODeL optimizes order and address placement jointly with an ILP~\cite{steiner2023model}, and Zhong et al.\ fold safe subgraphs before solving an ILP~\cite{zhong2023iccad}.
Lower bounds based on event cuts and wavefronts are standard in I/O complexity~\cite{elango2014}, and iterated optimistic admission against resource bounds is the LP--RPG pattern in planning~\cite{coles2013lprpg}.
Our trap bounds specialize these ideas to tensor roots.
To our knowledge, their domination over the event-cut bound, their incomparability and their computation with $O(n^2)$ cuts have not been stated before.

\paragraph{Memory planning in compilers and embedded runtimes.}
XLA's heap simulator accounts for allocation, free and sharing events over an instruction sequence~\cite{xla-heap}.
MLIR provides staged side effects and ownership-based deallocation~\cite{mlir}.
Exo verifies schedules of low-level code against memory effects~\cite{ikarashi2022exo}.
$\mu$-TWO profiles memory and executes optimized schedules for multi-model training~\cite{purandare2023mutwo}.
TFLite Micro executes operators in model order over a statically planned arena~\cite{david2021tflm}.
MCUNet co-designs architectures and inference for microcontrollers~\cite{lin2020mcunet}, and MCUNetV2 executes early layers patch by patch, recomputing overlaps~\cite{lin2021mcunetv2}, as fused-layer accelerators do in hardware~\cite{alwani2016fused}.
Sun et al.\ study recomputation for peak-memory reduction of inference on IoT devices~\cite{sun2023iot}.
Our contributions are the measurement that regeneration is unreliable, a realization that is exact on the generated program, and certificates that say when scheduling is finished.

\paragraph{Rematerialization.}
Checkpointing for chains goes back to revolve~\cite{griewank2000revolve} and to sublinear-memory training~\cite{chen2016training}.
Heterogeneous chains admit optimal dynamic programs~\cite{beaumont2019optimal}.
Checkmate~\cite{jain2020checkmate}, Moccasin~\cite{bartan2023moccasin}, Rockmate~\cite{zhao2023rockmate}, HiRemate~\cite{gusak2025hiremate}, MAGIS~\cite{chen2024magis} and DTR~\cite{kirisame2021dtr} optimize recomputation with MILP, constraint programming, hierarchy, graph transformations or online policies.
Kumar et al.\ bound recomputation through treewidth~\cite{kumar2019efficient}.
Bounded-width composition and typical sequences are classical~\cite{downey1999parameterized,bodlaender1996efficient,bodlaender2020typical}, and unrestricted pebbling is PSPACE-complete~\cite{gilbert1980pebbling,chan2023pebbling}.
Our complexity results delimit what selective recomputation with caps can achieve, which is the form relevant to real-time inference.

\section{Limitations}
\label{sec:limits}

Exactness, as stated in \cref{thm:lift}, holds for allocators that return exactly what is requested, which is the case on our CPU runs and in static arenas.
For the CUDA caching allocator we fall back to a measured guard.
Searching against a simulated caching allocator could restore exactness, since its behaviour is deterministic for a fixed single-stream request sequence, but we have not yet matched the simulator to the device, whose initial state includes the segments holding parameters.
Library kernels' effects and allocation sequences are profiled and checked by re-measurement, not proven, and every model is specific to one input shape.
The tiny-model study evaluates the operator graph and a greedy arena planner, not firmware on a microcontroller.
The savings from reordering are small except on branching networks, typically 1--3\% in training and zero on most inference graphs.
We present this as a finding that the certificates make visible, not as a weakness of the search.
\Cref{thm:fpt} is impractical, its novelty relative to general finite-state pebbling frameworks was checked only by a bounded literature review, and whether $\OPT/\max(\mathcal F,\mathcal R)$ is bounded remains open.

\section{Conclusion}

For memory-constrained inference, the cheapest memory is the memory saved by a free schedule change, and the most useful answer a scheduler can give is a certified one.
We showed that a schedule must be realized on the program that runs: regenerating code for a better order is unreliable, while permuting the generated program under a checked contract is exact.
Polynomial-time certificates show that most inference graphs, and all the public microcontroller-class models we studied, are already scheduled optimally, and they identify the branching networks where reordering pays.
Beyond the certified optimum, selective recomputation with caps is computationally hard in every general sense we examined.
It is useless on public tiny models whose peak is a single layer, but on architectures that hold a cheap value for a late reuse it pays: \RPubMin--\RPubMax\% on ReDimNet at under 0.4\% extra MACs.
This points to certified tile-level schedules as the way to buy memory at a bounded, explicit cost in latency.

\section*{Generative AI usage statement}
In this work, we used generative AI tools to assist with translation (parts of the text were translated from the authors' draft in another language into English), to implement methods (the experimental code, including the compiler harness, the baselines and the scripts that produce the tables and figures from the saved results), and to generate synthetic data sets (the random DAG families of \cref{app:synthetic}, whose construction and properties were checked by the authors).
Supporting qualitative and thematic data analysis is not applicable to this work.
Additionally, we used generative AI tools to polish the writing, to create the figures from our experimental results, to find related work, and to check that bibliographic entries exist and are cited correctly.
We have reviewed all AI-assisted work: all code and all text were checked line by line by the authors.
We take responsibility for the final content of this work, including text and claims produced with the aid of generative AI.

\bibliographystyle{mlsys2025}
\bibliography{refs}

@inproceedings{ansel2024pytorch2,
  title     = {{PyTorch 2}: Faster Machine Learning Through Dynamic {Python} Bytecode Transformation and Graph Compilation},
  author    = {Ansel, Jason and Yang, Edward and He, Horace and Gimelshein, Natalia and Jain, Animesh and Voznesensky, Michael and Bao, Bin and Bell, Peter and Berard, David and Burovski, Evgeni and Chauhan, Geeta and Chourdia, Anjali and Constable, Will and Desmaison, Alban and DeVito, Zachary and Ellison, Elias and Feng, Will and Gong, Jiong and Gschwind, Michael and Hirsh, Brian and Huang, Sherlock and Kalambarkar, Kshiteej and Kirsch, Laurent and Lazos, Michael and Lezcano, Mario and Liang, Yanbo and Liang, Jason and Lu, Yinghai and Luk, C. K. and Maher, Bert and Pan, Yunjie and Puhrsch, Christian and Reso, Matthias and Saroufim, Mark and Siraichi, Marcos Yukio and Suk, Helen and Zhang, Shunting and Suo, Michael and Tillet, Phil and Zhao, Xu and Wang, Eikan and Zhou, Keren and Zou, Richard and Wang, Xiaodong and Mathews, Ajit and Wen, William and Chanan, Gregory and Wu, Peng and Chintala, Soumith},
  booktitle = {ASPLOS},
  pages     = {929--947},
  year      = {2024}
}

@misc{xla-heap,
  title        = {{XLA} heap simulator},
  author       = {{OpenXLA}},
  howpublished = {\url{https://github.com/openxla/xla/blob/main/xla/service/heap_simulator/heap_simulator.h}},
  year         = {2026}
}

@misc{mlir,
  title        = {{MLIR}: Side Effects and Speculation; Ownership-Based Buffer Deallocation},
  author       = {{LLVM Project}},
  howpublished = {\url{https://mlir.llvm.org/docs/}},
  year         = {2026}
}

@inproceedings{wang2022hmcos,
  title     = {Hierarchical Memory-Constrained Operator Scheduling of Neural Architecture Search Networks},
  author    = {Wang, Zihan and Wan, Chengcheng and Chen, Yuting and Lin, Ziyi and Jiang, He and Qiao, Lei},
  booktitle = {Design Automation Conference (DAC)},
  year      = {2022}
}

@inproceedings{steiner2023model,
  title     = {{MODeL}: Memory Optimizations for Deep Learning},
  author    = {Steiner, Benoit and Elhoushi, Mostafa and Kahn, Jacob and Hegarty, James},
  booktitle = {ICML},
  pages     = {32618--32632},
  year      = {2023}
}

@article{jin2023newtools,
  title   = {New Tools for Peak Memory Scheduling},
  author  = {Jin, Ce and Purohit, Manish and Svitkina, Zoya and Vee, Erik and Wang, Joshua R.},
  journal = {arXiv:2312.13526},
  year    = {2023}
}

@inproceedings{zhong2023iccad,
  title     = {Memory-aware Scheduling for Complex Wired Networks with Iterative Graph Optimization},
  author    = {Zhong, Shuzhang and Li, Meng and Liang, Yun and Wang, Runsheng and Huang, Ru},
  booktitle = {ICCAD (arXiv:2308.13898)},
  year      = {2023}
}

@inproceedings{elango2014,
  title     = {On Characterizing the Data Movement Complexity of Computational {DAGs} for Parallel Execution},
  author    = {Elango, Venmugil and Rastello, Fabrice and Pouchet, Louis-No{\"e}l and Ramanujam, J. and Sadayappan, P.},
  booktitle = {SPAA},
  year      = {2014}
}

@article{coles2013lprpg,
  title   = {A Hybrid {LP-RPG} Heuristic for Modelling Numeric Resource Flows in Planning},
  author  = {Coles, Amanda and Coles, Andrew and Fox, Maria and Long, Derek},
  journal = {Journal of Artificial Intelligence Research},
  volume  = {46},
  year    = {2013}
}

@article{sethi1975complete,
  title   = {Complete Register Allocation Problems},
  author  = {Sethi, Ravi},
  journal = {SIAM Journal on Computing},
  volume  = {4},
  number  = {3},
  pages   = {226--248},
  year    = {1975}
}

@inproceedings{purandare2023mutwo,
  title     = {$\mu$-{TWO}: 3$\times$ Faster Multi-Model Training with Orchestration and Memory Optimization},
  author    = {Purandare, Sanket and Wasay, Abdul and Jain, Animesh and Idreos, Stratos},
  booktitle = {MLSys},
  year      = {2023}
}

@inproceedings{ikarashi2022exo,
  title     = {Exocompilation for Productive Programming of Hardware Accelerators},
  author    = {Ikarashi, Yuka and Bernstein, Gilbert Louis and Reinking, Alex and Genc, Hasan and Ragan-Kelley, Jonathan},
  booktitle = {PLDI},
  pages     = {703--718},
  year      = {2022}
}

@article{griewank2000revolve,
  title   = {Algorithm 799: Revolve: An Implementation of Checkpointing for the Reverse or Adjoint Mode of Computational Differentiation},
  author  = {Griewank, Andreas and Walther, Andrea},
  journal = {ACM Transactions on Mathematical Software},
  volume  = {26},
  number  = {1},
  pages   = {19--45},
  year    = {2000}
}

@article{chen2016training,
  title   = {Training Deep Nets with Sublinear Memory Cost},
  author  = {Chen, Tianqi and Xu, Bing and Zhang, Chiyuan and Guestrin, Carlos},
  journal = {arXiv:1604.06174},
  year    = {2016}
}

@article{beaumont2019optimal,
  title   = {Optimal Checkpointing for Heterogeneous Chains: How to Train Deep Neural Networks with Limited Memory},
  author  = {Herrmann, Julien and Beaumont, Olivier and Eyraud-Dubois, Lionel and Joly, Alexis and Shilova, Alena},
  journal = {arXiv:1911.13214},
  year    = {2019}
}

@inproceedings{jain2020checkmate,
  title     = {Checkmate: Breaking the Memory Wall with Optimal Tensor Rematerialization},
  author    = {Jain, Paras and Jain, Ajay and Nrusimha, Aniruddha and Gholami, Amir and Abbeel, Pieter and Keutzer, Kurt and Stoica, Ion and Gonzalez, Joseph E.},
  booktitle = {MLSys},
  year      = {2020}
}

@inproceedings{bartan2023moccasin,
  title     = {Moccasin: Efficient Tensor Rematerialization for Neural Networks},
  author    = {Bartan, Burak and Li, Haoming and Teague, Harris and Lott, Christopher and Dilkina, Bistra},
  booktitle = {ICML},
  year      = {2023}
}

@inproceedings{zhao2023rockmate,
  title     = {Rockmate: an Efficient, Fast, Automatic and Generic Tool for Re-materialization in {PyTorch}},
  author    = {Zhao, Xunyi and Le Hellard, Th{\'e}otime and Eyraud-Dubois, Lionel and Gusak, Julia and Beaumont, Olivier},
  booktitle = {ICML},
  year      = {2023}
}

@inproceedings{gusak2025hiremate,
  title     = {{HiRemate}: Hierarchical Approach for Efficient Re-materialization of Neural Networks},
  author    = {Gusak, Julia and Zhao, Xunyi and Le Hellard, Th{\'e}otime and Li, Zhe and Eyraud-Dubois, Lionel and Beaumont, Olivier},
  booktitle = {ICML},
  year      = {2025}
}

@inproceedings{chen2024magis,
  title     = {{MAGIS}: Memory Optimization via Coordinated Graph Transformation and Scheduling for {DNN}},
  author    = {Chen, Renze and Ding, Zijian and Zheng, Size and Zhang, Chengrui and Leng, Jingwen and Liu, Xuanzhe and Liang, Yun},
  booktitle = {ASPLOS},
  year      = {2024}
}

@inproceedings{kirisame2021dtr,
  title     = {Dynamic Tensor Rematerialization},
  author    = {Kirisame, Marisa and Lyubomirsky, Steven and Haan, Altan and Brennan, Jennifer and He, Mike and Roesch, Jared and Chen, Tianqi and Tatlock, Zachary},
  booktitle = {ICLR},
  year      = {2021}
}

@inproceedings{kumar2019efficient,
  title     = {Efficient Rematerialization for Deep Networks},
  author    = {Kumar, Ravi and Purohit, Manish and Svitkina, Zoya and Vee, Erik and Wang, Joshua R.},
  booktitle = {NeurIPS},
  pages     = {15146--15155},
  year      = {2019}
}

@book{downey1999parameterized,
  title     = {Parameterized Complexity},
  author    = {Downey, Rodney G. and Fellows, Michael R.},
  publisher = {Springer},
  year      = {1999}
}

@article{bodlaender1996efficient,
  title   = {Efficient and Constructive Algorithms for the Pathwidth and Treewidth of Graphs},
  author  = {Bodlaender, Hans L. and Kloks, Ton},
  journal = {Journal of Algorithms},
  volume  = {21},
  number  = {2},
  pages   = {358--402},
  year    = {1996}
}

@inproceedings{bodlaender2020typical,
  title     = {Typical Sequences Revisited --- Computing Width Parameters of Graphs},
  author    = {Bodlaender, Hans L. and Jaffke, Lars and Telle, Jan Arne},
  booktitle = {STACS},
  year      = {2020}
}

@article{gilbert1980pebbling,
  title   = {The Pebbling Problem is Complete in Polynomial Space},
  author  = {Gilbert, John R. and Lengauer, Thomas and Tarjan, Robert Endre},
  journal = {SIAM Journal on Computing},
  volume  = {9},
  number  = {3},
  pages   = {513--524},
  year    = {1980}
}

@article{chan2023pebbling,
  title   = {Hardness of Approximation in {PSPACE} and Separation Results for Pebble Games},
  author  = {Chan, Siu Man and Lauria, Massimo and Nordstr{\"o}m, Jakob and Vinyals, Marc},
  journal = {arXiv:2305.19104},
  year    = {2023}
}

@inproceedings{lin2020mcunet,
  title     = {{MCUNet}: Tiny Deep Learning on {IoT} Devices},
  author    = {Lin, Ji and Chen, Wei-Ming and Lin, Yujun and Cohn, John and Gan, Chuang and Han, Song},
  booktitle = {NeurIPS},
  year      = {2020}
}

@inproceedings{lin2021mcunetv2,
  title     = {{MCUNetV2}: Memory-Efficient Patch-based Inference for Tiny Deep Learning},
  author    = {Lin, Ji and Chen, Wei-Ming and Cai, Han and Gan, Chuang and Han, Song},
  booktitle = {NeurIPS},
  year      = {2021}
}

@inproceedings{david2021tflm,
  title     = {{TensorFlow Lite Micro}: Embedded Machine Learning for {TinyML} Systems},
  author    = {David, Robert and Duke, Jared and Jain, Advait and Janapa Reddi, Vijay and Jeffries, Nat and Li, Jian and Kreeger, Nick and Nappier, Ian and Natraj, Meghna and Regev, Shlomi and Rhodes, Rocky and Wang, Tiezhen and Warden, Pete},
  booktitle = {MLSys},
  year      = {2021}
}

@inproceedings{ahn2020serenity,
  title     = {Ordering Chaos: Memory-Aware Scheduling of Irregularly Wired Neural Networks for Edge Devices},
  author    = {Ahn, Byung Hoon and Lee, Jinwon and Lin, Jamie Menjay and Cheng, Hsin-Pai and Hou, Jilei and Esmaeilzadeh, Hadi},
  booktitle = {MLSys},
  year      = {2020}
}

@inproceedings{banbury2021mlperftiny,
  title     = {{MLPerf Tiny} Benchmark},
  author    = {Banbury, Colby and Janapa Reddi, Vijay and Torelli, Peter and Holleman, Jeremy and Jeffries, Nat and Kiraly, Csaba and Montino, Pietro and Kanter, David and Ahmed, Sebastian and Pau, Danilo and Thakker, Urmish and Torrini, Antonio and Warden, Peter and Cordaro, Jay and Di Guglielmo, Giuseppe and Duarte, Javier and Gibellini, Stephen and Parekh, Videet and Tran, Honson and Tran, Nhan and Wenxu, Niu and Xuesong, Xu},
  booktitle = {NeurIPS Datasets and Benchmarks},
  year      = {2021}
}

@article{zhang2017helloedge,
  title   = {Hello Edge: Keyword Spotting on Microcontrollers},
  author  = {Zhang, Yundong and Suda, Naveen and Lai, Liangzhen and Chandra, Vikas},
  journal = {arXiv:1711.07128},
  year    = {2017}
}

@inproceedings{choi2019tcresnet,
  title     = {Temporal Convolution for Real-time Keyword Spotting on Mobile Devices},
  author    = {Choi, Seungwoo and Seo, Seokjun and Shin, Beomjun and Byun, Hyeongmin and Kersner, Martin and Kim, Beomsu and Kim, Dongyoung and Ha, Sungjoo},
  booktitle = {Interspeech},
  year      = {2019}
}

@inproceedings{alwani2016fused,
  title     = {Fused-layer {CNN} Accelerators},
  author    = {Alwani, Manoj and Chen, Han and Ferdman, Michael and Milder, Peter},
  booktitle = {MICRO},
  pages     = {1--12},
  year      = {2016}
}

@inproceedings{sun2023iot,
  title     = {Minimizing Peak Memory Footprint of Inference on {IoT} Devices by Efficient Recomputation},
  author    = {Sun, Xiaofeng and Xu, Chaonong and Li, Chao},
  booktitle = {Advanced Intelligent Computing Technology and Applications (ICIC), LNCS},
  pages     = {15--26},
  year      = {2023}
}

@inproceedings{yakovlev2024redimnet,
  title     = {Reshape Dimensions Network for Speaker Recognition},
  author    = {Yakovlev, Ivan and Makarov, Rostislav and Balykin, Andrei and Malov, Pavel and Okhotnikov, Anton and Torgashov, Nikita},
  booktitle = {Interspeech},
  year      = {2024}
}

\appendix
\onecolumn

\section*{Appendix overview}

The appendix is organized in four parts.
\Cref{app:setup} describes the experimental setup in enough detail to reproduce every number in the paper.
\Cref{app:results} reports the complete results behind the summaries of \cref{sec:eval}, including the configurations that appear only in aggregate in the main text.
\Cref{app:lift,app:traps,app:remat,app:hard,app:fpt} contain the proofs, in the order in which the results appear in the paper.
\Cref{app:artifact} describes the supplementary archive.

\section{Experimental setup}
\label{app:setup}

\subsection{Software and hardware}
All CPU experiments use PyTorch~2.14.1 (CPU build), Hugging Face Transformers~4.57.1 and torchvision~0.29.1 on a single thread (\texttt{OMP\_NUM\_THREADS=1}) of a 22-core x86-64 workstation.
GPU experiments use the CUDA~12.6 build of the same PyTorch release on an NVIDIA GeForce RTX 2080 Ti (Turing, 11\,GB), one GPU per configuration, with no other process on the device.
CP-SAT is the OR-Tools implementation with one worker.
Weights are random (seed 1802) unless stated otherwise, inputs are synthetic, and dropout and stochastic depth are disabled, because both make even the stock program non-reproducible.

\subsection{Compiled-program experiments}
\paragraph{What is measured.}
For training we compile a function that runs the forward pass and an MSE loss against a fixed random target, and then call \texttt{backward()}.
There is no optimizer step.
For inference we compile the model in evaluation mode and run it under \texttt{torch.no\_grad()}.
The stock baseline is Inductor with \texttt{reorder\_for\_peak\_memory} and \texttt{inplace\_buffers} enabled, which is the default; only the FX-graph cache is disabled, so that every run compiles afresh.
The peak is the largest value of the profiler's tracked tensor-allocation counter during one warm step.
It includes the step's outputs, its loss and all gradients, and excludes the resident weights and inputs.
The column ``including the frame'' of \cref{tab:full} adds the bytes of the parameters, buffers, inputs and target back to the denominator.

\paragraph{Protocol per configuration.}
Each configuration runs in a fresh process and goes through the following steps.
\begin{enumerate}
\item Compile once with the stock compiler and check the compiled outputs and gradients against eager execution ($\mathrm{rtol}=10^{-4}$, $\mathrm{atol}=10^{-5}$).
\item Run two warm steps, then one observed step in which every wrapper statement executes inside a profiler scope and the storages bound to its assigned names are recorded.
\item Build the root model of every wrapper, compute the trap bounds (20\,s limit each) and search for an order (60\,s limit).
\item Install three versions of every wrapper in the same compiled module: the stock body, the stock unit order re-emitted by our emitter (``normalized''), and the optimized order.
\item Measure the three versions twice each, alternating, and compare their outputs, losses and gradients bitwise.
\item Force the optimized order into Inductor's reordering pass, recompile, and measure the regenerated program (``Regen.'').
\end{enumerate}
The prediction error of a wrapper is the measured peak inside the wrapper's window minus the predicted one.
The prediction is the stock entry level plus the model's peak minus the model's initial live bytes.

\paragraph{Configurations.}
Transformer encoders and encoder--decoders are \texttt{torch.nn} modules with four heads and a feed-forward width of four times the model width unless stated otherwise.
Hugging Face models use their default configurations with the stated numbers of layers.
For BERT, DistilBERT, GPT-2, T5 and ViT these are the base or small sizes; the Llama models are scaled down with hidden sizes 512 and 768.
Vision models are torchvision definitions.
The labels in the tables give the batch size $b$, the sequence length $s$, the model width $d$ and the number of layers.

\paragraph{Latency.}
Each latency configuration alternates the stock and the reordered program over four rounds.
Every round runs three warm-up steps and then 5 to 50 timed steps per variant, depending on the model's size.
We report the ratio of the medians.

\subsection{Operator-level inference experiments}
For the microcontroller-class models (\cref{sec:tiny}), the speaker-embedding networks (\cref{sec:rdn}) and the YOLO detectors we trace the model at the operator level and build an operator graph the way an embedded runtime sees it.
\begin{itemize}
\item \textbf{Tracing.} Small convolutional models are traced symbolically, with batch normalization folded into convolutions. The others are traced with \texttt{make\_fx} on concrete shapes after functionalization, decomposing to core ATen operators but keeping activations as single operators.
\item \textbf{Fusion.} Elementwise operators with a single tensor input and the same number of elements (activations, decomposed normalization) are fused into their producer when they are its only user.
\item \textbf{Views.} View operators alias their source and own no buffer.
\item \textbf{Sizes.} Activations are int8 for the microcontroller models and fp32 for the speaker and detection models; each surviving operator owns one output buffer.
\item \textbf{Outputs.} The model outputs are pinned until the end, so multi-output detectors are modeled correctly.
\end{itemize}
A schedule is executed operator by operator in Python, keeping exactly the values the schedule keeps.
The executed peak and the bitwise equality of all outputs with an unscheduled reference run are checked for every schedule we report.
The arena is computed with a TFLite-Micro-style greedy planner, which places buffers by decreasing size at the lowest offset that does not conflict, with 16- or 64-byte alignment.
MACs are counted with PyTorch's FLOP counter.

\subsection{Selective recomputation}
Candidates for recomputation are values that are produced before the peak event of the current optimal order and have a consumer after it.
For each candidate we enumerate single anchors $a$ among its ancestors whose cone, i.e.\ the operators between $a$ and the candidate, is closed: every input of an operator in the cone is $a$ or lies in the cone.
We keep the anchor that saves the most bytes, preferring cheaper cones on ties.
A transformation duplicates the cone, either once for all late consumers or once per late consumer, and rewires those consumers to the copy.
Every transformed graph is scheduled exactly and certified with the trap bounds.
A greedy loop applies the transformation with the lowest resulting peak as long as the peak decreases and the extra MACs stay within the budget.

\subsection{Theory experiments}
\label{app:synthetic}
The random DAG families of \cref{app:t1} have between 8 and 15 vertices with integer weights between 1 and 8.
\begin{itemize}
\item \textbf{random}: edges with probability 0.15--0.45 to every earlier vertex;
\item \textbf{sp}: random two-terminal series--parallel graphs;
\item \textbf{pumpkin}: two to four chains between a source and a sink;
\item \textbf{intree}: random in-trees;
\item \textbf{training}: a forward chain followed by a backward chain that reads the saved forward activations.
\end{itemize}
Exact optima are computed by dynamic programming over all ideals.
The capped experiments use exact Dijkstra search over (live set, counters) states with uniform cap two.

\section{Complete results}
\label{app:results}

\subsection{Compiled training on CPU}
\label{app:full}
\Cref{tab:full} lists every configuration of the main training grid.
All 78 training runs, which add the shape sweeps, ablations, seed repeats, latency runs and runs without the Inductor pass, have zero prediction error and bitwise-identical outputs and gradients.

\begin{table}[h]
\centering\scriptsize
\caption{Compiled training on CPU, main grid. Saving is relative to stock Inductor with its memory pass. The prediction error is the largest absolute error over the wrapper windows of the fixed-wrapper run. ``Regen.'' hands the same order back to Inductor. ``Cert.\ opt.'' counts wrapper graphs for which $\max(\mathcal F,\mathcal R)$ equals the found peak.}
\label{tab:full}
\resizebox{\textwidth}{!}{\begin{tabular}{@{}lrrrrrrrc@{}}
\toprule
Configuration & Params (M) & Stock Inductor (B) & Fixed wrapper (B) & Saving \% & Pred.\ err & Regen.\ saving \% & Regen.\ err (B) & Cert.\ opt. \\
\midrule
TransformerEnc 2L $d$64 $s$128 & 0.1 & 1{,}074{,}056 & 1{,}024{,}904 & 4.58 & 0 & 4.58 & 0 & 2/2 \\
TransformerEnc 4L $d$64 $s$128 & 0.2 & 1{,}932{,}168 & 1{,}883{,}016 & 2.54 & 0 & 2.54 & 0 & 2/2 \\
TransformerEnc 2L $d$256 $s$128 & 1.6 & 6{,}842{,}376 & 6{,}842{,}376 & 0.00 & 0 & 0.00 & 0 & 2/2 \\
TransformerEnc 2L $d$256 $s$256 & 1.6 & 8{,}678{,}408 & 8{,}542{,}472 & 1.57 & 0 & 1.57 & 0 & 2/2 \\
TransformerEnc 2L $d$256 $s$512 & 1.6 & 16{,}813{,}320 & 16{,}026{,}888 & 4.68 & 0 & 4.68 & 0 & 2/2 \\
TransformerEnc 2L $d$256 $s$1024 & 1.6 & 32{,}557{,}064 & 31{,}520{,}776 & 3.18 & 0 & 3.18 & 0 & 2/2 \\
Transformer enc--dec $d$64 & 0.2 & 1{,}073{,}928 & 992{,}264 & 7.60 & 0 & 6.13 & 15{,}872 & 2/2 \\
DistilBERT $b$1 $s$128 & 66.4 & 266{,}628{,}104 & 266{,}628{,}104 & 0.00 & 0 & -0.14 & 385{,}536 & 2/2 \\
DistilBERT $b$4 $s$512 & 66.4 & 657{,}635{,}336 & 641{,}931{,}272 & 2.39 & 0 & 2.39 & 0 & 2/2 \\
TransformerEnc 6L $d$256 $s$256 & 4.7 & 22{,}475{,}016 & 22{,}198{,}536 & 1.23 & 0 & 1.23 & 0 & 2/2 \\
TransformerEnc 4L $d$512 $s$512 & 12.6 & 62{,}243{,}080 & 61{,}182{,}216 & 1.70 & 0 & 1.70 & 0 & 2/2 \\
Transformer enc--dec $d$256 & 3.7 & 15{,}142{,}920 & 14{,}946{,}312 & 1.30 & 0 & 0.00 & 196{,}608 & 2/2 \\
DistilBERT $b$2 $s$256 & 66.4 & 278{,}829{,}064 & 272{,}537{,}608 & 2.26 & 0 & 1.70 & 1{,}564{,}672 & 1/2 \\
BERT-base $b$1 $s$128 & 109.5 & 436{,}743{,}176 & 436{,}743{,}176 & 0.00 & 0 & -0.09 & 385{,}536 & 2/2 \\
BERT-base $b$2 $s$256 & 109.5 & 446{,}582{,}792 & 441{,}854{,}984 & 1.06 & 0 & 1.06 & 3{,}072 & 1/2 \\
GPT-2 small $b$1 $s$256 & 124.4 & 503{,}269{,}384 & 500{,}904{,}968 & 0.47 & 0 & 0.47 & 0 & 2/2 \\
GPT-2 small $b$2 $s$512 & 124.4 & 712{,}161{,}288 & 699{,}590{,}664 & 1.77 & 0 & 1.77 & 0 & 2/2 \\
Llama 4L $h$512 $b$1 $s$256 & 29.0 & 119{,}296{,}008 & 117{,}721{,}096 & 1.32 & 0 & 1.32 & 0 & 2/2 \\
Llama 4L $h$512 $b$2 $s$512 & 29.0 & 151{,}750{,}664 & 149{,}852{,}424 & 1.25 & 0 & 1.25 & 0 & 2/2 \\
Llama 8L $h$768 $b$1 $s$512 & 81.2 & 334{,}289{,}928 & 331{,}402{,}248 & 0.86 & 0 & 0.86 & 0 & 2/2 \\
T5-small enc $b$2 $s$128 & 35.3 & 144{,}471{,}048 & 143{,}420{,}424 & 0.73 & 0 & -3.72 & 6{,}422{,}528 & 2/2 \\
T5-small enc $b$1 $s$512 & 35.3 & 153{,}120{,}776 & 149{,}975{,}048 & 2.05 & 0 & -30.65 & 50{,}072{,}576 & 2/2 \\
ViT-S/16 (6L) $b$4 & 11.0 & 129{,}562{,}456 & 126{,}002{,}248 & 2.75 & 0 & 2.75 & 0 & 2/2 \\
ViT-B/16 $b$2 & 85.8 & 361{,}024{,}904 & 352{,}079{,}240 & 2.48 & 0 & 2.48 & 0 & 1/2 \\
ResNet-18 $b$8 & 11.7 & 213{,}121{,}800 & 213{,}121{,}800 & 0.00 & 0 & 0.00 & 0 & 2/2 \\
ResNet-50 $b$4 & 25.6 & 418{,}362{,}888 & 408{,}368{,}648 & 2.39 & 0 & 1.14 & 5{,}218{,}304 & 2/2 \\
MobileNetV2 $b$8 & 3.5 & 436{,}370{,}056 & 436{,}370{,}056 & 0.00 & 0 & -1.20 & 5{,}246{,}848 & 2/2 \\
EfficientNet-B0 $b$4 & 5.3 & 224{,}117{,}064 & 224{,}117{,}064 & 0.00 & 0 & -3.10 & 6{,}949{,}856 & 2/2 \\
DenseNet-121 $b$2 & 8.0 & 261{,}278{,}600 & 261{,}278{,}600 & 0.00 & 0 & -2.51 & 6{,}563{,}840 & 2/2 \\
ConvNeXt-T $b$2 & 28.6 & 241{,}359{,}048 & 238{,}301{,}192 & 1.27 & 0 & 0.89 & 903{,}168 & 2/2 \\
RegNetY-400MF $b$8 & 4.3 & 246{,}797{,}672 & 246{,}797{,}672 & 0.00 & 0 & -0.59 & 1{,}460{,}360 & 2/2 \\
Swin-T $b$2 & 28.3 & 220{,}170{,}192 & 213{,}530{,}552 & 3.02 & 0 & 2.36 & 1{,}449{,}624 & 2/2 \\
\bottomrule
\end{tabular}
}
\end{table}

\subsection{Inference on CPU}
\Cref{tab:appinf} lists the 22 inference configurations of \cref{sec:inference}.
Every reordered program has zero prediction error and bitwise-identical outputs.
DeepLabV3 is the only configuration where the stock order is not optimal.
RegNetY-400MF is the only configuration where the bounds leave a gap: the order found there equals the stock order, and the bound is 13\% below it.

\begin{table}[h]
\centering\scriptsize
\caption{Inference on CPU (fp32, evaluation mode). Peak of the stock and of the reordered program, the best lower bound, whether it certifies the order, and bitwise equality with the stock program.}
\label{tab:appinf}
\begin{tabular}{@{}lrrrrrcc@{}}
\toprule
Configuration & Units & Stock (MB) & Ours (MB) & Saving \% & $\max(\mathcal F,\mathcal R)$ (MB) & Cert. & Bitwise \\
\midrule
BERT-base, $b$8 $s$512 & 133 & 75.50 & 75.50 & 0.0 & 75.50 & yes & yes \\
ConvNeXt-T, $b$8 & 115 & 58.00 & 58.00 & 0.0 & 58.00 & yes & yes \\
DeepLabV3-R50, $b$2, 512$^2$ & 119 & 167.77 & 150.99 & 10.0 & 150.99 & yes & yes \\
DenseNet-121, $b$8 & 242 & 144.51 & 144.51 & 0.0 & 144.51 & yes & yes \\
DistilBERT, $b$8 $s$512 & 67 & 75.50 & 75.50 & 0.0 & 75.50 & yes & yes \\
EfficientNet-B0, $b$8 & 164 & 48.17 & 48.17 & 0.0 & 48.17 & yes & yes \\
TransformerEnc 2L $d$64 & 2 & 0.39 & 0.39 & 0.0 & 0.39 & yes & yes \\
TransformerEnc 4L, $b$4 $s$1024 & 4 & 167.77 & 167.77 & 0.0 & 167.77 & yes & yes \\
Transformer enc--dec $d$256 & 34 & 1.05 & 1.05 & 0.0 & 1.05 & yes & yes \\
FCN-R50, $b$2, 512$^2$ & 107 & 171.97 & 171.97 & 0.0 & 171.97 & yes & yes \\
GoogLeNet, $b$8 & 89 & 38.54 & 38.54 & 0.0 & 38.54 & yes & yes \\
GPT-2 small, $b$4 $s$1024 & 109 & 130.84 & 130.84 & 0.0 & 130.84 & yes & yes \\
Inception-v3, $b$8 & 172 & 66.54 & 66.54 & 0.0 & 66.54 & yes & yes \\
Llama 4L, $b$4 $s$1024 & 57 & 92.29 & 92.29 & 0.0 & 92.29 & yes & yes \\
Llama 8L, $b$2 $s$2048 & 113 & 146.80 & 146.80 & 0.0 & 146.80 & yes & yes \\
LR-ASPP-MNv3, $b$4, 512$^2$ & 132 & 90.83 & 90.83 & 0.0 & 90.83 & yes & yes \\
MobileNetV2, $b$16 & 106 & 96.34 & 96.34 & 0.0 & 96.34 & yes & yes \\
RegNetY-400MF, $b$16 & 168 & 73.87 & 73.87 & 0.0 & 64.24 & no & yes \\
ResNet-50, $b$8 & 104 & 57.80 & 57.80 & 0.0 & 57.80 & yes & yes \\
Swin-T, $b$8 & 154 & 97.76 & 97.76 & 0.0 & 97.76 & yes & yes \\
T5-small enc., $b$8 $s$512 & 63 & 100.96 & 100.96 & 0.0 & 100.96 & yes & yes \\
ViT-B/16, $b$16 & 123 & 97.07 & 97.07 & 0.0 & 97.07 & yes & yes \\
\bottomrule
\end{tabular}

\end{table}

\subsection{GPU replication}
\Cref{tab:appgpu} lists the 50 GPU configurations of \cref{sec:gpu}.
``Det.\ stock'' records whether two stock runs are bitwise identical.
Where they are not, ``within'' means that the reordered program differs from the stock one by no more than stock differs from itself.

{\scriptsize
\setlength{\tabcolsep}{3pt}
\begin{longtable}{@{}lrrrrrcc@{}}
\caption{GPU replication on an RTX 2080 Ti. Peak allocated by the CUDA caching allocator inside the measured step.}\label{tab:appgpu}\\
\toprule
Configuration & Stock (MB) & Ours (MB) & Saving \% & Pred.\ err (B) & Regen.\ \% & Det.\ stock & Bitwise \\
\midrule
\endhead
bert-base-b1-s128 & 438.79 & 438.79 & 0.00 & 0 & -0.45 & no & within \\
bert-base-b2-s256 & 468.94 & 457.02 & 2.54 & 11{,}667{,}456 & 2.10 & no & within \\
convnext-tiny-b2 & 239.08 & 235.99 & 1.29 & 368{,}640 & 0.92 & no & within \\
densenet121-b2 & 261.49 & 261.49 & 0.00 & 0 & -2.59 & no & within \\
distilbert-b1-s128 & 270.12 & 270.12 & 0.00 & 0 & -0.73 & yes & yes \\
distilbert-b2-s256 & 288.08 & 282.84 & 1.82 & 6{,}436{,}352 & 3.00 & yes & yes \\
distilbert-b4-s512 & 658.17 & 658.17 & 0.00 & -- & 0.00 & yes & yes \\
efficientnet-b0-b4 & 226.04 & 226.04 & 0.00 & 0 & -1.36 & no & within \\
enc-L2-d256-s1024 & 33.69 & 32.47 & 3.62 & 917{,}504 & 3.62 & yes & yes \\
enc-L2-d256-s128 & 6.85 & 6.84 & 0.07 & 0 & 0.07 & yes & yes \\
enc-L2-d256-s256 & 8.69 & 8.67 & 0.20 & 0 & 0.20 & yes & yes \\
enc-L2-d256-s512 & 17.12 & 16.31 & 4.71 & 0 & 4.71 & yes & yes \\
enc-L2-d64-s128 & 1.18 & 1.13 & 4.17 & 0 & 4.17 & yes & yes \\
enc-L4-d512-s512 & 66.14 & 65.08 & 1.60 & 1{,}089{,}536 & 1.60 & yes & yes \\
enc-L4-d64-s128 & 2.04 & 1.99 & 2.41 & 0 & 2.41 & yes & yes \\
enc-L6-d256-s256 & 22.64 & 22.33 & 1.34 & 0 & 1.34 & yes & yes \\
encdec-L2-d256 & 15.14 & 14.95 & 1.30 & 0 & 0.00 & no & within \\
encdec-L2-d64 & 1.14 & 1.05 & 7.39 & 0 & 7.39 & yes & yes \\
gpt2-small-b1-s256 & 504.32 & 504.32 & 0.00 & 786{,}432 & 0.00 & yes & yes \\
gpt2-small-b2-s512 & 712.70 & 712.70 & 0.00 & 9{,}435{,}648 & 0.00 & yes & yes \\
llama-L4-h512-b1-s256 & 119.30 & 117.72 & 1.32 & 0 & -0.88 & yes & yes \\
llama-L4-h512-b2-s512 & 154.52 & 152.67 & 1.20 & 2{,}112{,}000 & 1.20 & yes & yes \\
llama-L8-h768-b1-s512 & 344.38 & 346.87 & -0.72 & 20{,}936{,}192 & -1.64 & yes & yes \\
mobilenetv2-b8 & 441.12 & 447.24 & -1.39 & 7{,}771{,}648 & -3.79 & no & within \\
regnet-y400-b8 & 250.36 & 250.36 & 0.00 & 231{,}424 & -0.73 & no & within \\
resnet18-b8 & 253.18 & 252.98 & 0.08 & 524{,}288 & 0.08 & no & within \\
resnet50-b4 & 418.16 & 418.21 & -0.01 & 2{,}801{,}664 & -1.91 & no & within \\
sweep-bert-L4-b1-s128 & 211.33 & 211.33 & 0.00 & 0 & -0.93 & no & within \\
sweep-bert-L4-b1-s256 & 217.63 & 216.05 & 0.73 & 1{,}024 & -1.08 & no & within \\
sweep-bert-L4-b1-s512 & 224.05 & 222.78 & 0.57 & 4{,}739{,}584 & 0.57 & no & within \\
sweep-bert-L4-b2-s128 & 216.70 & 218.15 & -0.67 & 4{,}063{,}232 & -1.03 & no & within \\
sweep-bert-L4-b2-s256 & 227.06 & 225.49 & 0.69 & 5{,}263{,}872 & 1.44 & no & within \\
sweep-bert-L4-b2-s512 & 234.71 & 235.71 & -0.43 & 9{,}335{,}808 & -0.32 & no & within \\
sweep-bert-L4-b4-s128 & 227.06 & 223.13 & 1.73 & 3{,}693{,}056 & 2.48 & no & within \\
sweep-bert-L4-b4-s256 & 234.71 & 236.76 & -0.88 & 6{,}315{,}008 & -0.43 & no & within \\
sweep-bert-L4-b4-s512 & 454.50 & 454.50 & 0.00 & 3{,}037{,}184 & 0.00 & no & within \\
sweep-gpt2-L4-b1-s128 & 274.50 & 274.50 & -0.00 & 786{,}432 & -0.00 & yes & yes \\
sweep-gpt2-L4-b1-s256 & 276.07 & 276.07 & 0.00 & 786{,}432 & 0.00 & yes & yes \\
sweep-gpt2-L4-b1-s512 & 282.89 & 282.89 & 0.00 & 1{,}594{,}368 & 0.00 & yes & yes \\
sweep-gpt2-L4-b2-s128 & 274.50 & 274.50 & 0.00 & 1{,}966{,}080 & 0.00 & yes & yes \\
sweep-gpt2-L4-b2-s256 & 277.28 & 281.45 & -1.51 & 4{,}192{,}256 & -1.79 & yes & yes \\
sweep-gpt2-L4-b2-s512 & 285.96 & 285.96 & 0.00 & 9{,}485{,}312 & 0.00 & yes & yes \\
sweep-gpt2-L4-b4-s128 & 276.88 & 279.66 & -1.00 & 9{,}781{,}248 & -1.24 & yes & yes \\
sweep-gpt2-L4-b4-s256 & 284.65 & 284.03 & 0.22 & 9{,}998{,}336 & -0.20 & yes & yes \\
sweep-gpt2-L4-b4-s512 & 507.99 & 507.99 & 0.00 & 9{,}449{,}984 & 0.00 & no & within \\
swin-t-b2 & 223.62 & 218.94 & 2.10 & 314{,}368 & 0.59 & no & within \\
t5enc-small-b1-s512 & 153.13 & 149.98 & 2.06 & 0 & 1.37 & no & within \\
t5enc-small-b2-s128 & 144.47 & 144.47 & 0.00 & 0 & -6.26 & no & within \\
vit-base-b2 & 355.61 & 355.10 & 0.14 & 10{,}948{,}608 & -0.06 & yes & yes \\
vit-small-b4 & 128.93 & 125.37 & 2.77 & 0 & 2.77 & yes & yes \\
\bottomrule
\end{longtable}

}

\subsection{Microcontroller-class models}
\Cref{tab:apptiny} lists the 16 models of \cref{sec:tiny}.
In every model the order in which TFLite Micro executes the operators is certified optimal, and the peak equals the inputs plus the output of a single operator.

\begin{table}[h]
\centering\scriptsize
\caption{Microcontroller-class models (batch 1, int8 activations). Peak live activations and greedy arena of the model order, whether the order is certified optimal, and whether executing the schedule is bitwise exact.}
\label{tab:apptiny}
\begin{tabular}{@{}lrrrrcc@{}}
\toprule
Model & Ops & MMACs & Peak (KB) & Arena (KB) & Order opt. & Exact \\
\midrule
FC autoencoder (AD) & 11 & 0.3 & 0.8 & 0.8 & yes & yes \\
EfficientNet-B0, 128$^2$ & 134 & 130.4 & 491.5 & 491.5 & yes & yes \\
GoogLeNet, 96$^2$ & 82 & 277.1 & 184.3 & 184.3 & yes & yes \\
ResNet-8 (CIFAR) & 15 & 12.6 & 49.2 & 49.2 & yes & yes \\
DS-CNN (KWS) & 12 & 3.3 & 19.2 & 19.2 & yes & yes \\
TC-ResNet8-1.5 & 16 & 3.4 & 6.5 & 6.5 & yes & yes \\
TC-ResNet8 & 16 & 1.6 & 5.7 & 5.7 & yes & yes \\
MobileNetV2-0.35, 160$^2$ & 65 & 32.4 & 384.0 & 384.0 & yes & yes \\
MobileNetV2-0.35, 96$^2$ & 65 & 12.5 & 138.2 & 138.2 & yes & yes \\
MobileNetV3-S, 128$^2$ & 80 & 20.4 & 114.7 & 114.7 & yes & yes \\
MNASNet-0.5, 128$^2$ & 65 & 36.0 & 131.1 & 163.8 & yes & yes \\
Tiny U-Net, 128$^2$ & 24 & 161.5 & 786.4 & 786.4 & yes & yes \\
Tiny U-Net, 96$^2$ & 24 & 90.9 & 442.4 & 442.4 & yes & yes \\
ShuffleNetV2-0.5, 128$^2$ & 92 & 14.5 & 147.5 & 147.5 & yes & yes \\
SqueezeNet1.1, 128$^2$ & 39 & 106.5 & 315.5 & 315.5 & yes & yes \\
MobileNetV1-0.25 (VWW) & 30 & 7.7 & 55.3 & 64.5 & yes & yes \\
\bottomrule
\end{tabular}

\end{table}

\subsection{Speaker-embedding networks and detectors}
\Cref{tab:apprdn} completes \cref{tab:rdnpub} with the 0.8\,s inputs and the greedy arena.
For B3 and B6 the greedy planner does not realize the saving, because of fragmentation.
\Cref{tab:appyolo} reports the YOLO detectors at $640\times640$.
In YOLOv5nu and YOLOv8n the peak is the input image plus the output of the first convolution.
In the other models it is a concatenation in the feature pyramid, whose large inputs are both needed at the peak event.
Selective recomputation therefore has almost nothing to work with, which illustrates the condition stated in \cref{sec:rdn}.

\begin{table}[h]
\centering\scriptsize
\caption{Public ReDimNet backbones: peak of the certified optimal order, after recomputing the stem (and the saving), greedy arena before and after, and extra MACs.}
\label{tab:apprdn}
\begin{tabular}{@{}lrrrrrr@{}}
\toprule
Size & Frames & Opt.\ peak & Recompute & Arena before & Arena after & MACs \% \\
\midrule
B0 & 201 & 7.28 & 6.85 ($-$6.0\%) & 7.29 & 6.85 & 0.33 \\
B0 & 81 & 2.96 & 2.79 ($-$6.0\%) & 2.97 & 2.79 & 0.34 \\
B1 & 201 & 10.49 & 9.85 ($-$6.1\%) & 10.49 & 9.85 & 0.33 \\
B1 & 81 & 4.27 & 4.01 ($-$6.1\%) & 4.27 & 4.01 & 0.34 \\
B2 & 201 & 16.78 & 15.91 ($-$5.2\%) & 16.79 & 15.91 & 0.25 \\
B2 & 81 & 6.83 & 6.47 ($-$5.2\%) & 6.83 & 6.48 & 0.25 \\
B3 & 201 & 16.78 & 15.91 ($-$5.2\%) & 16.79 & 16.85 & 0.09 \\
B3 & 81 & 6.83 & 6.47 ($-$5.2\%) & 6.83 & 6.86 & 0.09 \\
B4 & 201 & 33.56 & 31.76 ($-$5.4\%) & 33.57 & 31.77 & 0.11 \\
B4 & 81 & 13.66 & 12.93 ($-$5.4\%) & 13.67 & 12.94 & 0.11 \\
B5 & 201 & 33.56 & 31.76 ($-$5.4\%) & 33.57 & 31.77 & 0.06 \\
B5 & 81 & 13.66 & 12.93 ($-$5.4\%) & 13.67 & 12.94 & 0.06 \\
B6 & 201 & 33.56 & 31.76 ($-$5.4\%) & 33.57 & 33.63 & 0.02 \\
B6 & 81 & 13.66 & 12.93 ($-$5.4\%) & 13.67 & 13.69 & 0.02 \\
\bottomrule
\end{tabular}

\end{table}

\begin{table}[h]
\centering\scriptsize
\caption{YOLO detectors at $640\times640$ (fp32, raw head outputs). The model order is certified optimal in every case.}
\label{tab:appyolo}
\begin{tabular}{@{}lrrrcr@{}}
\toprule
Model & Ops & GMACs & Optimal peak (MB) & Cert. & With recomputation (MB) \\
\midrule
YOLOv5nu & 69 & 3.9 & 11.47 & yes & 11.47 ($-$0.0\%) \\
YOLOv8n & 65 & 4.4 & 11.47 & yes & 11.47 ($-$0.0\%) \\
YOLO11n & 101 & 3.3 & 14.34 & yes & 14.34 ($-$0.0\%) \\
YOLOv8s & 68 & 14.3 & 22.12 & yes & 21.71 ($-$1.9\%) \\
YOLO11s & 103 & 10.8 & 28.67 & yes & 28.67 ($-$0.0\%) \\
YOLOv8m & 84 & 39.5 & 39.32 & yes & 39.32 ($-$0.0\%) \\
YOLO11m & 123 & 34.1 & 56.52 & yes & 56.12 ($-$0.7\%) \\
\bottomrule
\end{tabular}

\end{table}

\subsection{Negative rule families for gradient scheduling}
\label{app:negative}
A natural question is whether the savings of \cref{sec:train} follow a simple rule.
Two candidate rules are ``finish a stage's parameter-gradient branch before advancing the backbone'' and ``always defer parameter gradients''.
Both can nearly double the peak.
Let a source produce a private activation $a$ of size $A$ and a tensor $u$ of size 1.
Operation $G$ consumes $a$ and $u$ and creates a pinned gradient of size $P$.
Operation $X$ consumes $u$ and creates $z$ of size 1 with scratch $Q$.
Counting outputs allocated before inputs are released, the two orders have peaks
\[
\mathrm{peak}(GX)=\max(A+1+P,\;P+Q+2),\qquad \mathrm{peak}(XG)=\max(A+Q+2,\;A+P+2).
\]
With $A=1$ and $P=Q=M$, finishing $G$ first costs $2M+2$, while deferring it costs $M+3$.
With $A=Q=M$ and $P=1$ the outcomes are exchanged.
Hence neither rule is safe; each loses a factor approaching two on one of the families.

\subsection{Theory experiments}
\label{app:synthetic}
\label{app:t1}
\Cref{tab:appt1} breaks the comparison of the lower bounds with exact optima down by family.
The reverse bound is exact on every in-tree and training-shaped instance, which matches \cref{prop:intree}.
\Cref{tab:appt3} reports the number of states that our prototype of the bounded-treewidth algorithm generates.
The growth with $K$ and $B$ is the parameter dependence of \cref{thm:fpt}, and it confirms that the algorithm is a tractability statement rather than a practical tool.

\begin{table}[h]
\centering\scriptsize
\caption{Number of random DAGs on which each lower bound equals the exact optimum, and the worst ratio of the optimum to $\max(\mathcal F,\mathcal R)$.}
\label{tab:appt1}
\begin{tabular}{@{}lrrrrrrr@{}}
\toprule
Family & $n$ & Mandatory & $G$ & $\mathcal F$ & $\mathcal R$ & $\max(\mathcal F,\mathcal R)$ & Worst $\OPT/\max$ \\
\midrule
random & 160 & 63 & 111 & 116 & 143 & 148 & 1.188 \\
sp & 160 & 137 & 145 & 146 & 156 & 157 & 1.231 \\
pumpkin & 160 & 75 & 113 & 116 & 145 & 148 & 1.231 \\
intree & 160 & 142 & 142 & 142 & 160 & 160 & 1.000 \\
training & 160 & 160 & 160 & 160 & 160 & 160 & 1.000 \\
\bottomrule
\end{tabular}

\end{table}

\begin{table}[h]
\centering\scriptsize
\caption{States generated by the complete-trace dynamic program (limit $3\cdot10^6$) on ladder graphs (width 2) and two-arm pumpkins, with $K$ the cap and $B$ the memory budget.}
\label{tab:appt3}
\begin{tabular}{@{}lllrrrrr@{}}
\toprule
Graph & $K$ & $B$ & $n{=}6$ & $n{=}8$ & $n{=}10$ & $n{=}12$ & $n{=}14$ \\
\midrule
ladder & 1 & 4 & 4 & 34 & 17 & 232 & 118 \\
ladder & 1 & 6 & 313 & 268 & 121 & 4{,}849 & 595 \\
ladder & 1 & 8 & 1{,}615 & 3{,}280 & 41{,}656 & 43{,}055 & 2{,}705 \\
ladder & 2 & 4 & 10 & 659 & 169 & 45{,}831 & 17{,}554 \\
ladder & 2 & 6 & 105{,}131 & 128{,}836 & 31{,}570 & $>3{\cdot}10^6$ & 1{,}101{,}844 \\
ladder & 2 & 8 & $>3{\cdot}10^6$ & $>3{\cdot}10^6$ & $>3{\cdot}10^6$ & $>3{\cdot}10^6$ & $>3{\cdot}10^6$ \\
pumpkin & 1 & 4 & 64 & 55 & 55 & 55 & 55 \\
pumpkin & 1 & 6 & 1{,}089 & 2{,}317 & 23{,}100 & 33{,}279 & 40{,}650 \\
pumpkin & 1 & 8 & 6{,}011 & 102{,}948 & 764{,}110 & $>3{\cdot}10^6$ & $>3{\cdot}10^6$ \\
pumpkin & 2 & 4 & 8{,}333 & 2{,}967 & 2{,}967 & 2{,}967 & 2{,}967 \\
pumpkin & 2 & 6 & $>3{\cdot}10^6$ & $>3{\cdot}10^6$ & $>3{\cdot}10^6$ & $>3{\cdot}10^6$ & $>3{\cdot}10^6$ \\
pumpkin & 2 & 8 & $>3{\cdot}10^6$ & $>3{\cdot}10^6$ & $>3{\cdot}10^6$ & $>3{\cdot}10^6$ & $>3{\cdot}10^6$ \\
\bottomrule
\end{tabular}

\end{table}

\label{app:t3}

\section{Exact realization}
\label{app:lift}

\subsection{The contract}
The contract of \cref{thm:lift} describes what a generated program must guarantee for a permutation of its statements to be both correct and exactly predictable.
Its state $Q=(E,G,O,U,H)$ consists of five parts:
\begin{itemize}
\item the live storage epochs $E$, where an epoch is one allocation-to-free interval of a storage;
\item the defined content generations $G$ of byte regions inside those epochs;
\item the owners $O$ of each epoch, namely wrapper names, tuple slots, saved values, native references and returned values;
\item the outstanding asynchronous uses $U$;
\item any externally visible state $H$, such as random-number generators.
\end{itemize}
The six conditions are the following.
\begin{enumerate}[label=(C\arabic*)]
\item Every unit's reads and writes are described at the level of byte regions and content generations, including partial writes through aliases.
\item Every owner is recorded, and every access has completed before the storage it touches is released.
\item Every unit computes a deterministic function of the contents it reads.
\item The sequence of charged allocation and release events of every unit is the recorded one.
\item This sequence is independent of the surrounding live memory: executing a unit next to any disjoint live frame $R$ adds exactly $|R|$ to every event and leaves $R$ untouched.
\item Emitting a new order introduces no events other than those of the units, their allocations and their releases.
\end{enumerate}
An order is \emph{admissible} if it respects every conflicting pair of region accesses and every recorded ownership constraint.

\subsection{Proof of \cref{thm:lift}}
The proof has two parts, one about values and one about memory.

For values, consider two adjacent units in the reference order that have no conflicting accesses.
Swapping them changes neither the contents either of them reads nor the contents left behind, because their written regions are disjoint from each other's read and written regions.
An admissible order can be reached from the reference order by such swaps.
By (C1) and (C3), every unit therefore reads, in the new order, exactly the content generations it read in the reference order and computes the same result.
If the units are deterministic at the byte level the results are bitwise equal.

For memory, we maintain a correspondence between the abstract state of the root model and the concrete state of the program, event by event.
Allocation creates a fresh epoch on both sides.
Binding a name or saving a value adds the same owner on both sides.
Releasing a name removes it on both sides.
A storage is freed only when its owner set is empty and its accesses are complete, which happens at the same event on both sides by (C2).
By (C4) the charged sizes match, and by (C5) the bytes held by units that are not currently executing contribute the same constant to every event.
By (C6) no other events occur.
Hence the live bytes agree after every event, and so do their maxima.
\qed

\subsection{What breaks without each condition}
Each condition is needed; the following small programs violate one condition each.
\begin{itemize}
\item \textbf{Owner-sensitive call (C2).} A root of size 10 has two owners, $a$ and $z$. Unit $D$ drops $z$. Unit $K$ reads the root, drops $a$ and allocates an output of size 6. The order $D,K$ peaks at 10, while $K,D$ peaks at 16, with identical code and inputs.
\item \textbf{Hidden runtime state (C3, C5).} A unit uses scratch 4 or 64 depending on a flag that another unit sets.
\item \textbf{Partial write through an alias (C1).} One unit reads element zero through one view, another overwrites only element zero through another view. Swapping them changes the returned value from 2 to 9.
\item \textbf{Asynchronous completion (C2).} An asynchronous read observes a later overwrite unless completion is ordered before the write.
\item \textbf{Regrouping (C6).} Two units that each allocate and free 8 bytes peak at 8 when executed atomically and at 16 when their events are interleaved.
\item \textbf{Envelope versus workspace.} A unit that frees an input of 10 and then allocates an output of 6 has net change $-4$ and maximal prefix change 0, so the commonly used difference between the two is 4 although the unit has no workspace.
\item \textbf{A late alias} (observed during development). Emitting an alias only at the return statement kept a tuple, and with it a gradient, alive and raised the ResNet-18 backward peak by 16\,KiB. Emitting aliases as early as their dependencies allow restores exactness.
\item \textbf{The caching allocator (C5).} On the GPU the block assigned to a request depends on the history of the allocator's pools (\cref{sec:gpu}), which is the failure we measure there.
\end{itemize}

\section{Trap bounds}
\label{app:traps}

Throughout, $L(X)$ is the live memory after the ideal $X$, $c(v)$ the event cost of $v$ beyond the live memory, $A(v)$ the strict ancestors and $D(v)$ the descendants of $v$ including $v$.

\subsection{Soundness and domination}
\paragraph{Soundness of $\mathcal F$.}
Fix a proper ideal $R$ and an arbitrary topological order.
Let $v$ be the first operation of the order outside $R$.
All ancestors of $v$ precede it and lie in $R$, so $v$ is in the ready frontier $F(R)$.
The set $X$ of operations before $v$ satisfies $A(v)\subseteq X\subseteq R$, so the event of $v$ costs $L(X)+c(v)\ge f_v(R)\ge\min_{u\in F(R)}f_u(R)$.
Taking the maximum over $R$ gives $\mathcal F\le\OPT$.

\paragraph{Soundness of $\mathcal R$.}
Fix a nonempty ideal $T$ and let $v$ be the last operation of $T$ in the order.
Then $v$ is maximal in $T$, and the set of operations before $v$ contains $T\setminus\{v\}$ and contains no descendant of $v$.
The event of $v$ therefore costs at least $q_v(T)$.

\paragraph{Domination of $G$.}
For every $v$ the set $R_v=V\setminus D(v)$ is an ideal whose ready frontier is exactly $\{v\}$.
The ideals $X$ with $A(v)\subseteq X\subseteq R_v$ are exactly the ideals that contain $A(v)$ and exclude $v$.
Hence $f_v(R_v)$ equals the event-cut term $g_v$, and $\mathcal F\ge\max_v g_v=G$.
Symmetrically, $T=A(v)\cup\{v\}$ has the unique maximal element $v$ and the interval $[A(v),V\setminus D(v)]$, so $\mathcal R\ge G$.

\subsection{Computation}
At a capacity $\lambda$, call a ready operation $v$ admissible for $R$ if $f_v(R)\le\lambda$.
Enlarging $R$ can only enlarge the interval over which $f_v$ minimizes, so admissibility persists.
Saturating admissions from $R=\emptyset$ therefore reaches a unique fixed point.
If the fixed point is a proper ideal $R$, every operation of its frontier has $f_v(R)>\lambda$, which certifies $\mathcal F>\lambda$.
Conversely, if some proper ideal $T$ has $\min_{v\in F(T)}f_v(T)>\lambda$, no admission sequence can leave $T$.
The first operation to leave it would be admissible for an ideal contained in $T$, and hence for $T$ itself, a contradiction.
Saturation thus completes if and only if $\lambda\ge\mathcal F$.
The adaptive algorithm avoids a search over $\lambda$: it admits an operation of minimum $f_v(R)$ at every step and returns the largest value it recorded.
It performs $n$ admissions with at most $n$ frontier evaluations each.
Each evaluation is a minimum-weight closure: operations carry their allocations, a release node per root carries a gain equal to the root's size, and infinite arcs encode precedence and the interval endpoints.
This gives $O(n^2)$ minimum-cut computations.
The reverse bound removes maximal operations from $V$ in the same way; its converse argument uses the first removed member of any nonempty ideal.

\subsection{Incomparability}
\begin{itemize}
\item \textbf{$\mathcal R>\mathcal F$.} On the backward wrapper of DistilBERT ($b1$, $s128$) the forward bound is 266{,}237{,}960\,B, while the reverse bound and the optimum are both 266{,}628{,}104\,B.
\item \textbf{$\mathcal F>\mathcal R$.} Let two independent operations each consume their own external input of size 2 and use scratch 1, with initial memory 4. Whichever runs first costs 5, so $\mathcal F=\OPT=5$. In the reverse bound each operation may assume that the other has already run, so $\mathcal R=3$, and 4 with the boundary terms.
\end{itemize}

\subsection{Two families}

\begin{proposition}\label{prop:intree}
Let $\mathcal T_h$ be a complete binary in-tree of height $h\ge1$ with unit weights, a zero-weight source feeding every leaf, and a pinned root.
Then $\mathcal F=3$ and $\mathcal R=\OPT=h+2$.
\end{proposition}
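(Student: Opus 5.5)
The plan is to prove the four claims separately. The recursive Sethi--Ullman order gives $\OPT\le h+2$. One explicit trap $T$ gives $\mathcal R\ge h+2$, and then $\mathcal R\le\OPT$ from \cref{thm:traps}(a) closes the chain $h+2\le\mathcal R\le\OPT\le h+2$. For $\mathcal F$, the lower bound is the event-cut term at the root, and the upper bound comes from choosing $X=A(v)$ in the definition of $f_v$.

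\emph{Upper bound on $\OPT$.} By induction on $h$, the order ``schedule the left subtree recursively, then the right subtree recursively, then the root of the subtree'' has peak $h+2$ when started from empty live memory; the zero-weight source is irrelevant. For $h=1$ the events cost $1,2,3$. For height $h$, the left subtree peaks at $h+1$ and leaves one live value. The right subtree then peaks at $1+(h+1)=h+2$. The root event costs $2+1=3\le h+2$.

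\emph{Lower bound via $\mathcal R$.} Let $T$ consist of the source, all leaves and all height-one vertices; this is a nonempty ideal. Its maximal elements are the height-one vertices $v$ (for $h=1$, the root, and the argument below still applies). For such $v$, any admissible $X$ contains every leaf and every other height-one vertex, and excludes $v$ and all its ancestors. Hence both leaf children of $v$ are live, since their consumer $v$ is unexecuted. Now consider the root path of $v$. At each height $j=1,\dots,h-1$, the ancestor of $v$ at height $j+1$ has a child that is not on the path; let $S_j$ be the subtree hanging from that child. The top vertex of $S_j$ has its parent excluded from $X$. Moreover $S_j$ contains executed vertices, namely its height-one vertices or its leaves. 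Therefore some vertex of $S_j$ that lies in $X$ still has an unexecuted consumer, and is live. This gives one live unit from each of the $h-1$ disjoint subtrees, two from $v$'s children, and $c(v)=1$ for the event, so $q_v(T)\ge h+2$. Hence $\mathcal R\ge h+2$.

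\emph{The forward bound.} For every $R$ and every frontier vertex $v$ we have $A(v)\subseteq R$, and $A(v)$ is itself an ideal, so $f_v(R)\le c(v)+L(A(v))$. For an internal vertex, $L(A(v))=2$: only its two children are live, and the source output has size zero. For a leaf it is $0$, and for the source it is $0$. Hence $\min_{v\in F(R)}f_v(R)\le 3$ for all $R$, and so $\mathcal F\le3$. Conversely, $\mathcal F\ge G$ by \cref{thm:traps}(a). The root term of $G$ is $1+2=3$, because any ideal excluding the root but containing its ancestors leaves both children live. So $\mathcal F=3$.

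The main obstacle is the counting step in the $\mathcal R$ bound: one must make sure that the minimizing $X$ cannot ``close off'' a hanging subtree $S_j$ without leaving a live value. The argument that handles this is that the parent of the top of $S_j$ is an ancestor of $v$, hence excluded from $X$. I would state this carefully as a frontier-nonemptiness claim for each $S_j$. I would also check the $h=1$ boundary case separately: there $T=V$, the only maximal vertex is the root, and $q=2+1=3$.
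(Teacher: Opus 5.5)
Your proof is correct and follows the paper's argument. It uses the same trap $T$ (the source and all vertices of height at most one) with one live value from each of the $h-1$ subtrees hanging off the path above $v$, the ancestor prefix $X=A(v)$ to get $\mathcal F\le 3$, and an event-cut term to get $\mathcal F\ge G\ge 3$. The only difference is that you obtain $\OPT\ge h+2$ from $h+2\le\mathcal R\le\OPT$ instead of the paper's separate induction over the two child subtrees, which is a valid shortcut because the paper invokes the soundness of $\mathcal R$ anyway.
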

\begin{proof}
\emph{$\OPT=h+2$.} Evaluating one child subtree, keeping its root, then evaluating the other and finally the parent gives $h+2$ by induction.
For the lower bound, restrict any order to each child subtree; by induction each restriction contains an event with at least $h+1$ live values.
At the later of these two events the other subtree has already started and its parent has not yet run, so the other subtree holds at least one live value, which gives $h+2$.

\emph{$\mathcal F=3$.} Every internal vertex is admissible at capacity 3 using the ideal of its own ancestors, in which exactly its two children are live.
Hence saturation completes at capacity 3.
The lowest internal vertex shows $\mathcal F\ge G\ge 3$.

\emph{$\mathcal R\ge h+2$.} Let $T$ consist of the source and all vertices of height at most one; its maximal elements are the vertices of height one.
Fix such a vertex $v$ and an ideal $X$ with $T\setminus\{v\}\subseteq X\subseteq V\setminus D(v)$; here $D(v)$ consists of $v$ and its $h-1$ strict ancestors.
Both children of $v$ lie in $X$ and are live because $v\notin X$.
For each strict ancestor $a$ of $v$, let $a'$ be its child off the path to $v$.
The subtree of $a'$ meets $X$, because it contains vertices of height at most one.
The restriction of $X$ to that subtree is a nonempty ideal of an in-tree whose root's consumer $a$ is not in $X$, so some maximal element of the restriction has an unexecuted consumer and is live.
These $h-1$ subtrees are disjoint from each other and from the children of $v$.
Hence $L(X)\ge2+(h-1)$ and $q_v(T)\ge h+2$, and soundness gives $\mathcal R\le\OPT$.
\end{proof}

\begin{proposition}\label{prop:merge}
Let a zero-weight source feed $k\ge2$ unit-weight values $a_1,\dots,a_k$.
Let a zero-weight merge $m_{ij}$ consume $a_i$ and $a_j$ for every pair $i<j$, and let a zero-weight sink consume all merges.
Then $\mathcal F=2$ and $\mathcal R=\OPT=k$.
\end{proposition}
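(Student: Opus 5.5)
The plan is to compute the three quantities directly from the definitions, using the soundness statements of \cref{thm:traps} for the inequalities that come for free. First I fix the cost data. The source, the merges $m_{ij}$ and the sink all have $a=s=0$, so $c(m_{ij})=c(\text{sink})=0$. Each $a_i$ allocates one unit and has $c(a_i)=1$. A value $a_i$ is live after an ideal $X$ exactly when $a_i\in X$ and some merge $m_{ij}$ or $m_{ji}$ lies outside $X$. Merge outputs and the source have zero weight, so $L(X)$ is the number of such $a_i$.

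\emph{$\OPT=k$.} For the upper bound I take the order source, $a_1,\dots,a_k$, then all merges, then the sink. The largest event is the computation of $a_k$, which costs $(k-1)+1=k$; every later event costs at most $k$. For the lower bound, let $a_l$ be the last $a$-value in an arbitrary topological order. Just before it runs, every $a_j$ with $j\ne l$ has been computed, and its merge with $a_l$ cannot have run. So all $k-1$ of them are live, and the event of $a_l$ costs at least $k$.

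\emph{$\mathcal R=k$.} I turn the same argument into a trap. Let $T$ be the source together with $a_1,\dots,a_k$; its maximal elements are the $a_i$. For a fixed $i$, any $X$ with $T\setminus\{a_i\}\subseteq X\subseteq V\setminus D(a_i)$ contains every $a_j$ with $j\ne i$. It also excludes every merge involving $a_i$, since those merges lie in $D(a_i)$. Hence $L(X)\ge k-1$ and $q_{a_i}(T)\ge k$. Therefore $\mathcal R\ge k$, and soundness in \cref{thm:traps}(a) gives $\mathcal R\le\OPT=k$.

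\emph{$\mathcal F=2$.} The lower bound comes from the domination $\mathcal F\ge G$ of \cref{thm:traps}(a). For any merge $m_{ij}$, every ideal containing $A(m_{ij})\supseteq\{a_i,a_j\}$ and excluding $m_{ij}$ keeps both $a_i$ and $a_j$ live, so $g_{m_{ij}}\ge2$. For the upper bound I show that every ready vertex of every proper ideal $R$ has $f_v(R)\le2$, which gives $\min_{v\in F(R)}f_v(R)\le2$ for all $R$. I choose the witness $X=A(v)$ in each case; it is an ideal and lies inside $R$ because $v$ is ready.
\begin{itemize}
\item For $v=a_i$, $X$ is the source alone, with $L=0$, so $f_v\le1$.
\item For $v=m_{ij}$, $X=\{\text{source},a_i,a_j\}$ has exactly two live values, so $f_v\le2$.
\item For the sink, $X=V\setminus\{\text{sink}\}$ has $L=0$, so $f_v\le0$.
\end{itemize}
Equivalently, saturation at capacity $2$ completes, which by the computation argument of \cref{app:traps} is the same as $\mathcal F\le2$.

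The only delicate point I expect is bookkeeping: checking that the witness sets are ideals, and that the zero-weight source, merge outputs and pinned sink contribute nothing to $L$. Once the liveness rule is fixed, every step is a direct count.
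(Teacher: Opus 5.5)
Your proof is correct and matches the paper's argument step for step: the same order for $\OPT\le k$, the ancestor prefixes $A(v)$ as witnesses for $\mathcal F\le2$, a merge for $G\ge2$, and the trap $T=\{s,a_1,\dots,a_k\}$ with soundness for $\mathcal R=\OPT=k$. Your separate last-$a_l$ argument for $\OPT\ge k$ is redundant, since $k\le\mathcal R\le\OPT$ already gives it. Like the paper, you skip the trivial case $R=\emptyset$, where only the source is ready and $f_s(\emptyset)=0$.
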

\begin{proof}
Computing all $a_i$ first and then all merges has peak $k$, so $\OPT\le k$.
For $\mathcal F\le2$, consider a proper ideal $R$.
If some $a_i\notin R$, it is ready with prefix $\{s\}$ and costs 1.
If all $a_i\in R$ but some merge $m_{ij}$ is not, its prefix $\{s,a_i,a_j\}$ has two live values.
Otherwise only the sink is outside $R$ and costs nothing.
Every merge shows $G\ge2$.
For $\mathcal R\ge k$, let $T=\{s,a_1,\dots,a_k\}$, whose maximal elements are the $a_i$, and let $X$ satisfy $T\setminus\{a_i\}\subseteq X\subseteq V\setminus D(a_i)$.
Every $a_l$ with $l\ne i$ lies in $X$, and its consumer $m_{il}$ lies in $D(a_i)$ and hence outside $X$, so $a_l$ is live.
Therefore $q_{a_i}(T)\ge1+(k-1)=k$.
\end{proof}

Exhaustive computation confirms $\mathcal R=\OPT$ also with merges of weight one for $k\le6$, where $\mathcal F=\binom k2$ and $\mathcal R=\OPT=\binom k2+2$.
Whether $\OPT/\max(\mathcal F,\mathcal R)$ is bounded on all DAGs remains open; the largest ratio we found is 1.5.

\section{Necessary recomputation}
\label{app:remat}

\begin{lemma}[Projection]\label{lem:proj}
Let $P\subseteq V$ and let $Z_P$ be the one-shot instance with the weights of all vertices outside $P$ set to zero and the dependencies unchanged.
If a capped schedule computes every vertex of $P$ exactly once with peak at most $B$, then $\OPT_1(Z_P)\le B$.
\end{lemma}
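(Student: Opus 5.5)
The plan is to turn the capped schedule into a one-shot order of $Z_P$ and compare costs event by event. Fix a capped schedule $\sigma$ with peak at most $B$ that computes every vertex of $P$ exactly once. Let $\pi$ be the order of \emph{first} computations of all vertices. Every vertex must be computed at least once, because the sink is computed and all of its ancestors must be resident when their consumers first run. Each vertex's inputs are resident at its first computation, so each was computed earlier. Hence $\pi$ is a topological order and therefore a legal one-shot schedule of $Z_P$. It remains to show that every event of $\pi$ in $Z_P$ costs at most the corresponding event of $\sigma$.

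Consider the first computation of $v$ in $\sigma$, and let $X$ be the set of vertices whose first computation precedes it; $X$ is exactly the prefix of $\pi$ before $v$. In $Z_P$ the event of $v$ costs
\[
L_{Z_P}(X)+w'_v,
\]
where $w'_v=w_v$ if $v\in P$ and $0$ otherwise. Vertices outside $P$ contribute zero to both terms, so only $P$-vertices need to be accounted for.

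First, if $v\in P$, its output of $w_v$ bytes is allocated at this event of $\sigma$, so $\sigma$ pays for it too.

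Second, take $u\in P\cap X$ that is live in $Z_P$ after $X$. Either $u$ is pinned or it has a consumer $x\notin X$, so $x$ has not yet had its first computation. Since $u$ is computed exactly once in $\sigma$, that single copy must still be resident at $x$'s first computation, which comes after the current event. A value is never recreated once deleted, so $u$ is resident in $\sigma$ at the current event, and allocated before any input release. The pinned case is the same argument with the end of the program in place of $x$.

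Summing these contributions, the $Z_P$ cost is at most the resident bytes of $\sigma$ at that event, including the new output. This is at most $\sigma$'s peak, which is at most $B$. Taking the maximum over all events gives $\OPT_1(Z_P)\le\mathrm{peak}(\pi)\le B$.

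The main obstacle is the second step: justifying that ``live in the one-shot model'' implies ``resident in $\sigma$''. This relies crucially on $u$ being computed exactly once, and on first computations of consumers requiring resident inputs. I would also check how transient envelopes $s(v)$ are handled, for instance if they are zeroed outside $P$ or are part of the Moccasin model; $s(v)\le$ the corresponding scratch in $\sigma$ suffices. In addition, I would confirm that the timing of releases, with the output allocated before inputs are freed, matches the cost convention $L(S)+a(v)+s(v)$.
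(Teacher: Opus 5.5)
Your proposal is correct and follows the same route as the paper's proof. Both take the order of first computations as the one-shot order of $Z_P$. Both then use the fact that a $P$-value computed only once cannot be deleted while a consumer (or, if pinned, the end of the program) is still pending, so it is resident at the corresponding capped event, and each event of $Z_P$ costs no more than that capped event, hence at most $B$.
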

\begin{proof}
The order of first computations is a topological order of $Z_P$.
A value of $P$ that is computed only once cannot be deleted while it is still needed, since it could not be recreated.
It is therefore resident at every event of $Z_P$ at which the one-shot schedule keeps it, and each one-shot event costs no more than the corresponding capped event.
\end{proof}

\paragraph{The cover bound.}
For an event $h$ let $\ell(h)=w_h+\sum_{p\in\mathrm{pred}(h)}w_p$ be its local footprint.
Let $X_h$ be the values produced by strict ancestors of $h$ that have a consumer among the strict descendants of $h$, excluding the inputs of $h$.
Every value in $X_h$ that is computed only once must be resident at the first execution of $h$.
With $z_v=\mathbf 1[N_v\ge2]$ this gives the knapsack-cover constraint
\[
\textstyle\sum_{v\in X_h}w_v z_v\;\ge\;\ell(h)+\sum_{v\in X_h}w_v-B.
\]
Its fractional minimum of $\sum_v c_v z_v$, obtained greedily by increasing $c_v/w_v$, is a lower bound on the extra work.
The maximum over all events is the cover bound evaluated in \cref{sec:remat}.
The constraints of all events can also be combined in one covering linear program; a rational dual solution then certifies the bound without trusting the solver.

\section{Hardness}
\label{app:hard}

\subsection{Proof of \cref{thm:hard}}
\paragraph{Membership.} With a uniform cap $k$, a normalized schedule has at most $kn$ computations and at most $kn$ deletions, so it is a polynomial certificate whose peak can be checked in polynomial time.

\paragraph{Hardness.} We reduce from weighted one-shot pebbling on pumpkin graphs, which is strongly NP-hard~\cite{jin2023newtools}.
Let $(G,B)$ be an instance with sink $t$.
First, make all weights positive: replace every weight $w_v$ by $(n+1)w_v+1$ and $B$ by $(n+1)B+n$.
At most $n$ values coexist, so this preserves the answer, and the numbers remain polynomial in unary.
Next, append the following gadget:
\begin{itemize}
\item zero-weight vertices $r_1,\dots,r_k$ and $z_1,\dots,z_{k-1}$, and barrier vertices $h_1,\dots,h_{k-1}$ of weight $B$;
\item edges $t\to r_i$ for every $i$, and $r_i\to h_i\to z_i\to r_{i+1}$ for $i<k$;
\item the new sink is $r_k$, and every vertex receives the cap $k$.
\end{itemize}

\emph{If.} Run a one-shot schedule of $G$ with peak at most $B$, compute $r_1$, delete every old value, compute $h_1$ and $z_1$, delete $h_1$, and repeat.
Each phase stays within $B$, and every vertex is computed at most $k$ times.

\emph{Only if.} At the first computation of each barrier $h_i$, no old value can be resident, because all old weights are positive and $h_i$ alone has weight $B$.
The first computation of $r_{i+1}$ requires a fresh $t$, and hence a computation of every old vertex, after that barrier, and the first $r_1$ requires one before the first barrier.
These $k$ phases are disjoint, so with cap $k$ every old vertex is computed exactly once in each phase.
Restricting any one phase to the old events yields a one-shot schedule of $G$ with peak at most $B$.

\emph{Treewidth.} A pumpkin has treewidth two.
Attach to a bag containing $t$ the chain of bags $\{t,r_i,h_i\}$, $\{t,h_i,z_i\}$, $\{t,z_i,r_{i+1}\}$ for $i=1,\dots,k-1$.
These bags cover all new edges, every vertex occurs in a connected set of bags, and no bag has more than three vertices.
\qed

\subsection{Binary caps}
If two moments of an execution have the same live set, the actions between them can be removed: the remaining schedule is still valid, its counters only decrease and its peak does not increase.
Hence caps above $2^n$ are never needed, and with caps written in binary the problem embeds unrestricted black pebbling, which is PSPACE-complete~\cite{gilbert1980pebbling,chan2023pebbling}.

\subsection{Series--parallel graphs}
\paragraph{The fork/join region.}
Given items $i$ with weights $w_i$ and prices $c_i$, build a binary tree of zero-weight forks $f_I$ and joins $j_I$ of weight $W=C+1$, whose leaves are the item vertices $a_i$ of weight $w_i$ and price $c_i$.
The region is two-terminal series--parallel with out-degree two.
All of its values fit in $B=\sum_i w_i+(n-1)W$.

\paragraph{Knapsack equality for minimum work with uniform cap two.}
Append $t\to r_1\to H\to z\to r_2$ and $t\to r_2$, where $H$ has weight $B-C$ and the other new vertices have weight zero.
At the first computation of $H$ no join fits beside it, so the resident item vertices form a set $S$ with $\sum_{i\in S}w_i\le C$.
After $H$, every join and every item outside $S$ must be computed again to produce $t$ for $r_2$.
Conversely, every set $S$ with $\sum_S w_i\le C$ can be realized this way.
Hence the minimum extra work equals $\sum_i c_i-\max\{\sum_{i\in S}c_i:\sum_{i\in S}w_i\le C\}$.

\paragraph{Partition with caps at most three.}
Require three results of the region, separated by two barriers of weight $B-C$.
Give item vertices cap two and all other vertices cap three.
An item that is absent at both barriers would need three computations, so the item sets $S_1,S_2$ resident at the two barriers cover all items.
Each has weight at most $C$, and the total weight is $2C$, so they form an exact partition.

\paragraph{An exponential trade-off frontier.}
Take the path $s\to u_1\to\dots\to u_m\to g\to h\to q\to v_m\to\dots\to v_1$ with the additional edges $u_i\to v_i$.
Let $w(u_i)=c(u_i)=a_i=2^{i-1}$, $w(h)=H\ge A=\sum_i a_i$, and give all other vertices weight and price zero.
Keeping a subset $S$ of the $u_i$ resident through $h$ and recomputing the others realizes the pair $(M,E)=(H+a(S),\,A-a(S))$ of peak and extra work.
Every schedule satisfies $M+E\ge H+A$, so all $2^m$ pairs are non-dominated.

\section{The bounded-treewidth algorithm}
\label{app:fpt}

\subsection{Why the obvious interface fails}
Take vertices $s,u,a,h,c,t$ with weights $0,10,1,100,1,1$, edges $s\to u\to a\to h\to c\to t$ and $u\to t$, and cap two everywhere.
In history A every vertex is computed once and $u$ is kept through $h$, with peak 111.
In history B $u$ is discarded before $h$ and recomputed afterwards, with peak 101.
Both end with the same resident boundary values and the same boundary counters, yet history B has used up both computations of $u$.
A later reset of weight 111 that requires the region again can be completed after history A but not after history B.
Hence resident boundary values and boundary counters do not form a sufficient interface, even when no internal value is resident.

\subsection{Local witnesses and types}
Fix a nice tree decomposition of width $t$, let $h=t+1$, and let $X_x$, $V_x$ and $I_x=V_x\setminus X_x$ be the bag, the vertices below node $x$ and its private vertices.
A \emph{local witness} at $x$ is a complete event sequence over $V_x$ that starts and ends empty.
It computes every private vertex between one and its cap times, respects every edge inside $V_x$, and stays within $B$.
Its \emph{boundary word} is the subsequence of compute and delete events of bag vertices, of length at most $2Kh$.
Between consecutive boundary events, and before the first and after the last, we record the sequence of private resident weights and reduce it to its typical sequence~\cite{bodlaender2020typical}, which has length at most $2B+1$.
The \emph{type} of a witness is its boundary word together with these reduced profiles; the number of types is at most $\mathcal W(h,K)\,\mathcal T(B)^{2Kh+1}$ with $\mathcal W\le\sum_{\ell\le2Kh}(2h)^\ell$ and $\mathcal T(B)=(B+1)^{2B+1}$.
For each type the algorithm retains an actual witness of minimum private work.

\subsection{The invariant}
The algorithm maintains \emph{word-preserving segmentwise one-way coverage}: for every admissible local witness there is a retained witness with the same boundary word, no more private work, and a private profile that stutter-dominates the other one in every segment.
Two facts make this invariant compose.
First, typical sequences are compatible with concatenation and with constant shifts, and they admit a merge lemma for sums of independent profiles~\cite{bodlaender2020typical}.
Second, private vertices interact with the rest of the schedule only through memory, and their caps have already been checked on the retained witness.
Replacing a witness by a retained one with the same word therefore keeps every surrounding schedule valid without increasing its peak or its work.
Typical equivalence is used only to deduplicate; the induction relies on one-way domination.

\subsection{Transitions}
\begin{itemize}
\item \textbf{Leaf.} The empty witness.
\item \textbf{Introduce $v$.} No private vertex is adjacent to $v$, so only edges inside the bag need checking. For every retained child witness, a dynamic program over the position in that literal witness and the type of the prefix built so far inserts up to $k_v$ compute/delete pairs of $v$. Equal prefix types at the same position have the same future, so one representative suffices; this needs $O(Kn)$ positions times a parameter-bounded number of types, not $n^{K}$ placements.
\item \textbf{Forget $v$.} Relabel the events of $v$ as private, which concatenates segments and shifts their levels by $w_v$ where $v$ is resident, require that $v$ was computed at least once, and charge $c_v$ times its number of computations.
\item \textbf{Join.} Combine two retained witnesses with identical boundary words: synchronize the bag events, interleave the private events of the two children by a dynamic program over pairs of positions and the merged prefix type, count resident bag values once, and add the private work.
\item \textbf{Root.} At the empty root every vertex is private, so the retained witness is a complete schedule; truncating it after the first computation of the sink gives the answer.
\end{itemize}
Every step uses polynomially many literal positions times a number of types that depends only on $t$, $K$ and $B$, which gives the bound $f(t,K,B)\,\poly(n,L)$ of \cref{thm:fpt}.

\subsection{Comparison-restricted augmentation}
Let $\OPT(B,p)$ be the least work over schedules that never have more than $p$ resident values of positive weight at an allocation event.
For $0<\varepsilon\le1$ round every positive weight to $\lceil w_v/\delta\rceil$ with $\delta=\varepsilon B/p$, and solve exactly with budget $B'=\lfloor p/\varepsilon\rfloor+p$.
Every comparison schedule fits in $B'$, because rounding adds at most $p$ units at each event.
The returned schedule respects all caps, its true peak is at most $\delta B'\le(1+\varepsilon)B$, and its work is at most $\OPT(B,p)$.
The running time is $f(t,K,p,1/\varepsilon)\,\poly(n,L)$.

\section{Supplementary material}
\label{app:artifact}
The supplementary archive reproduces every number, table and figure that concerns public models, and it contains all result files behind them.
\begin{itemize}
\item \texttt{experiments/harness/}: fixed-wrapper realization for Inductor: wrapper capture, root models, emission, regeneration and the per-configuration driver.
\item \texttt{experiments/edge/}: operator-level graphs, schedule execution, the arena planner and selective recomputation.
\item \texttt{experiments/*.py}: the study drivers, the CP-SAT baseline, the allocator simulation, and the scripts that regenerate all tables and figures from the saved results.
\item \texttt{experiments/runs*/}, \texttt{edge/results/}, \texttt{edge/rdnpub/}, \texttt{theory/}: the saved results.
\item \texttt{solvers/}: the exact capacity search, the forward and reverse trap bounds, their independent verifiers, the CP-SAT model with recomputation caps, and the prototype of the bounded-treewidth algorithm.
\end{itemize}
The \texttt{README.md} maps every section, table and figure to the command that produces it.
The certificate verifiers and the table scripts use only the Python standard library.

\end{document}